\documentclass{article}
\usepackage{graphicx}
\usepackage{subfigure}

\usepackage[preprint]{neurips_2026}

\usepackage[utf8]{inputenc} 
\usepackage[T1]{fontenc}    
\usepackage{hyperref}       
\usepackage{url}            
\usepackage{booktabs}       
\usepackage{amsfonts}       
\usepackage{nicefrac}       
\usepackage{microtype}      
\usepackage{xcolor}         

\usepackage{caption}
\usepackage{subcaption}

\usepackage{amsmath}
\usepackage{amssymb}
\usepackage{mathtools}
\usepackage{amsthm}
\usepackage{thm-restate}

\usepackage[capitalize,noabbrev]{cleveref}

\theoremstyle{plain}
\newtheorem{theorem}{Theorem}[section]
\newtheorem{proposition}[theorem]{Proposition}

\newtheorem{lemma}[theorem]{Lemma}

\theoremstyle{definition}

\theoremstyle{remark}
\newtheorem{remark}[theorem]{Remark}

\usepackage[textsize=tiny]{todonotes}

\renewcommand{\paragraph}[1]{\smallskip\noindent{{\bf #1}}}

\title{SPHERE-JEPA: Spherical Prediction with Homogeneous Embeddings}

\author{%
  L\'eo Nicollier\\
  Université Paris-Saclay, CNRS,\\ ENS Paris-Saclay, Centre Borelli, \\
  Advanced Track and Trace \\
  \texttt{leo.nicollier@gmail.com}
  \And
  Max Dunitz\\
  Advanced Track and Trace
  \And
  Marc Pic\\
  Advanced Track and Trace
  \And
  Pablo Mus\'e\\
  Université Paris-Saclay, CNRS,\\ ENS Paris-Saclay, Centre Borelli \\
  \And
  Enric Meinhardt-Llopis\\
  Université Paris-Saclay, CNRS,\\ ENS Paris-Saclay, Centre Borelli \\
  \And
  Gabriele Facciolo\\
  Université Paris-Saclay, CNRS,\\ ENS Paris-Saclay, Centre Borelli \\
}

\begin{document}
\maketitle
\begin{abstract}
A fundamental open question in self-supervised learning (SSL) is the explicit characterization of the optimal geometry of the learned representations.
Recently, LeJEPA identified isotropic Gaussian embeddings as optimal for minimizing downstream prediction risk in Euclidean spaces. 
However, the corresponding problem for distributions supported on lower-dimensional manifolds, such as the hypersphere, remains unexplored.
In this work, we demonstrate that extending this minimax analysis to smooth distributions on Riemannian manifolds fundamentally changes the optimal solution. 
We show that, under a worst-case formulation, both \(k\)-nearest neighbors and kernel ridge regression induce hyperspherical uniformity.
More precisely, we show that uniform distributions on manifolds are optimal for \(k\)-nearest neighbors, and that the uniform distribution on the sphere is optimal for kernel ridge regression with both the
exponential dot-product kernel and the linear kernel. This theoretical insight reveals a fundamental limitation of Gaussian embeddings: their non-uniform density induces anisotropic $k$-NN  neighborhoods, severely biasing the estimator. 
To correct this, we introduce \textbf{SPHERE-JEPA}, a theoretically grounded SSL framework. 
We adapt LeJEPA's Cramér--Wold projection mechanism to enforce hyperspherical uniformity rather than a Gaussian prior. 
Empirically, SPHERE-JEPA yields significant improvements, boosting texture retrieval mAP by over 6\%, while consistently matching or outperforming LeJEPA on standard benchmarks—including a $+1.8\%$ linear probing gain on ImageNet-1K (ViT-B/14).\end{abstract}

\section{Introduction}
Self-supervised learning (SSL) has emerged as a powerful paradigm for learning visual representations without manual annotations~\citep{simclr}. 
By leveraging invariances across multiple augmented views of the same data, modern SSL methods such as SimCLR~\citep{simclr}, BYOL~\citep{byol}, and DINO~\citep{dinov1} have achieved performance comparable to supervised learning across a wide range of tasks.

Until recently, most SSL methods relied primarily on heuristic mechanisms to prevent representation collapse and shape the geometry of learned embeddings, without explicitly characterizing what constitutes an optimal representation space.
With 
LeJEPA,~\citet{balestriero2025lejepaprovablescalableselfsupervised} provided a first step towards such a characterization by showing that an isotropic Gaussian distribution is optimal for linear ridge regression and $k$-nearest neighbor regression in a worst-case formulation. 
These two evaluation protocols are commonly used to assess the quality of SSL representation, as they probe, respectively, the linear separability of features and the local geometric structure of the embedding space.
However, this analysis is restricted to distributions with densities in $\mathbb{R}^d$, and therefore excludes a broad class of natural representations supported on lower-dimensional manifolds, such as the hypersphere.

\begin{figure}[t]
  \centering
  \includegraphics[width=0.8\linewidth]{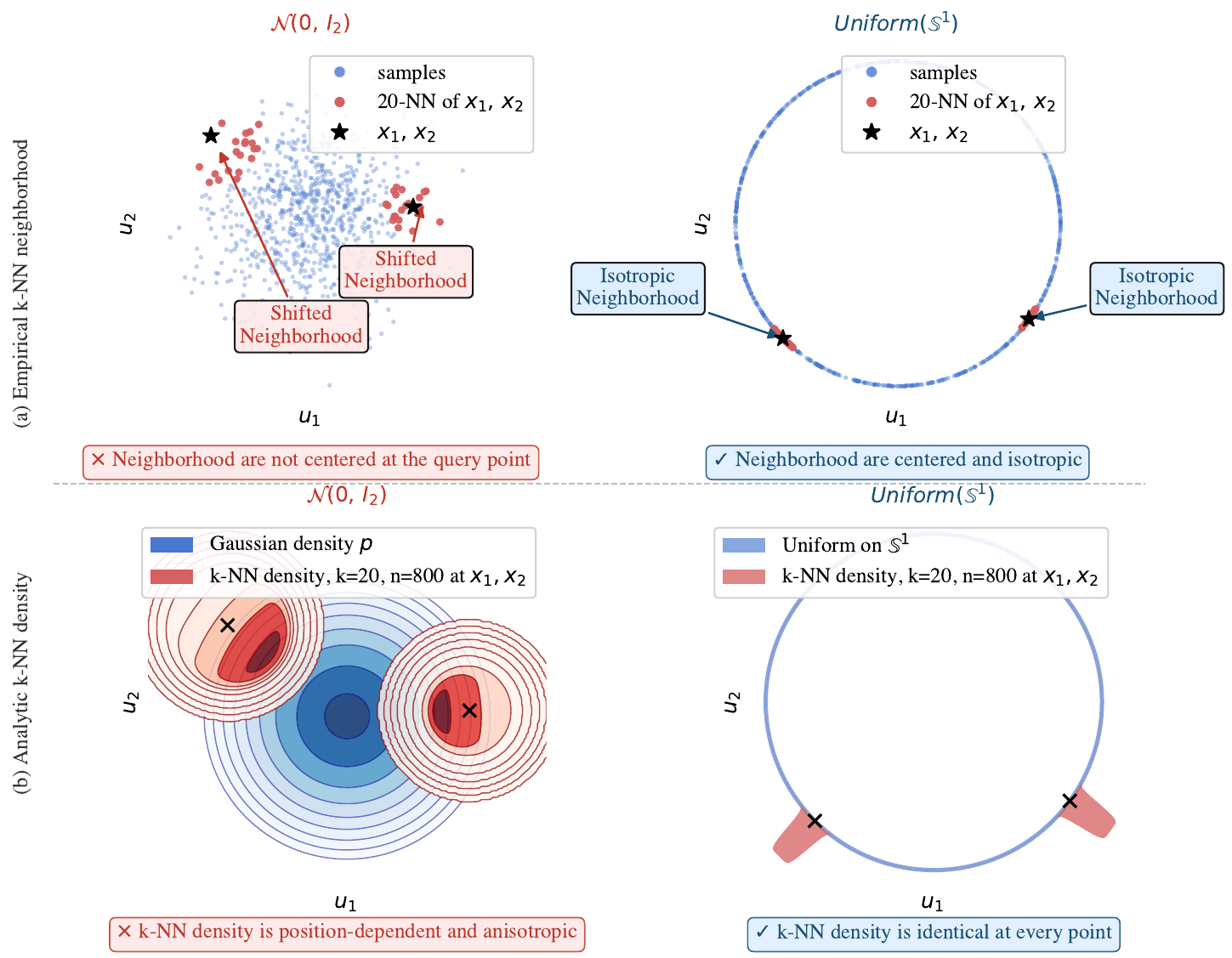}
  \caption{%
    \textbf{k-NN  
    neighborhoods are density-biased.}
    \textbf{(a)} Empirical $k$-NN neighborhoods ($k=20$) at two query points $x_1, x_2$: under a Gaussian distribution on $\mathbb{R}^2$, neighborhoods are not centered at the query point but are skewed toward regions of higher density, resulting in anisotropic and directionally biased neighborhoods.
    In contrast, for a uniform distribution on $\mathbb{S}^1$ the neighborhoods are centered and isotropic everywhere.
    \textbf{(b)} Analytic $k$-NN density: under a Gaussian distribution, the induced densities are position-dependent and anisotropic, whereas the uniform distribution on $\mathcal{S}^1$ yields identical densities at every point, reflecting its intrinsic geometric symmetry.}
  \label{fig:fig1}
\end{figure}

In this work, we extend this analysis to smooth distributions supported on Riemannian manifolds. Under the same worst-case formulation as LeJEPA, we derive geometric optimality principles for both local and kernel-based learning methods.

First, we show that minimizing the worst-case bias of \(k\)-nearest neighbors implies that the representation distribution must be uniform on the underlying manifold.
Second, we prove that minimizing the worst-case bias of kernel ridge regression with both the linear kernel and the exponential dot-product kernel \(K(x,y)=e^{\kappa x^\top y}\) forces the representation distribution to be uniform on the hypersphere.

This result contrasts with Gaussian representations, whose non-uniform density induces anisotropic local neighborhoods, distorting the local geometry and degrading performance for nonparametric methods such as $k$-NN (see Figure~\ref{fig:fig1}).
Building on this theoretical insight, we introduce \textbf{SPHERE-JEPA}, a self-supervised learning framework that enforces hyperspherical uniformity.
Our approach follows the same design as LeJEPA and relies on the same projection-based regularization mechanism inspired by the Cramér--Wold theorem~\citep{CuestaAlbertos2007CramerWold}, which enforces a target distribution through random one-dimensional projections. 
We repurpose this projection-based mechanism to enforce a \emph{uniform distribution on the hypersphere} rather than a Gaussian distribution in $\mathbb{R}^d$, leading to \textbf{Sketched Uniform Spherical Regularization (SUSReg)}.

We evaluate our method on ImageNet-100~\citep{imagenet} using a ResNet-18, and on ImageNet-1K using a ViT-B/14~\citep{dinov2} backbone. 
Our results demonstrate that SPHERE-JEPA consistently matches or outperforms Gaussian-based regularization.
Notably, on ImageNet-1K, it yields a $+1.8\%$ accuracy gain in linear probing and improves average transfer performance across diverse downstream datasets. 
Furthermore, in a controlled texture retrieval task designed to probe $k$-NN behavior, our approach yields significant gains, improving mean average precision by over 6\% in the learned representation space, without incurring any additional computational cost.

\paragraph{Contributions.}
Our key contributions are as follows: {\em (i)} We extend the theoretical analysis of optimal SSL representations from Euclidean densities to distributions supported on smooth manifolds; {\em (ii)} We prove that the uniform distribution on the hypersphere is optimal under a worst-case formulation that combines radial kernel ridge regression and $k$-NN; {\em (iii)} We introduce SUSReg, a projection-based regularizer that promotes hyperspherical uniformity in learned representations; {\em (iv)} We demonstrate that SUSReg consistently matches or improves upon Gaussian-based regularization on standard SSL benchmarks, while yielding substantial gains on a dedicated nonparametric retrieval task.

\paragraph{Organization.}
The remainder of the paper is structured as follows. Section~\ref{sec:notation} introduces the notation and setup, followed by related work in Section~\ref{sec:related}. 
Section~\ref{OptimalityRepresentation} develops the theoretical analysis of optimal representation geometries for \(k\)-nearest neighbors and kernel ridge regression.
In Section~\ref{sec:method}, we introduce SPHERE-JEPA and the SUSReg regularization. 
Experiments are reported in Section~\ref{sec:experiments}.
We conclude in Section~\ref{conclusion}.

\section{Notation and Setup} \label{sec:notation}
We adopt the same notation as in
\cite{balestriero2025lejepaprovablescalableselfsupervised}.
We consider a dataset composed of $N$ independent samples. Each sample is observed through $V$ views, yielding data points
$x_{n,v}\in\mathbb R^{D}$, $n=1,\dots,N$, $v=1,\dots,V$,
where $D$ denotes the input dimension (e.g., for an image of spatial resolution $H \times W$ with $C$ channels, $D = C \times H \times W$).
The views typically correspond to data augmentations of a given image (e.g.,\ image crops or geometric transformations).

Following standard SSL practice~\citep{SwAV, dinov1}, we distinguish between \textit{local} and \textit{global} views~\citep{caron2021emerging}. Both are obtained as crops of the input image: local views correspond to smaller crops capturing limited spatial context, while global views are larger crops that preserve most of the image content.
We denote by $V_g$ the number of global views, by $V_l$ the number of local views, and by $V_a$ the total number
of views.
We index all views $v=1,\ldots, V_a$ such that the first $V_g$ indices correspond to global views.
%
We assume that the samples $\{x_n\}_{n=1}^N$ are independent and identically distributed.

\paragraph{Encoder.}
Let $f_\theta:\mathbb R^{D}\to\mathbb R^{d}$ denote a parametric encoder (neural network) with parameters $\theta\in\mathbb R^{P}$, which maps data to a latent space. Its architecture is left unspecified and can be selected to match the inductive biases of the data type (e.g., convolution or self-attention).
For each sample and view, the encoder produces an embedding $z_{n,v} \coloneqq f_\theta(x_{n,v}) \in \mathbb R^{d}$, constrained to lie on the unit sphere via:
\begin{equation}
\tilde z_{n,v} \coloneqq \frac{z_{n,v}}{\|z_{n,v}\|} \in \mathbb S^{d-1}.
\end{equation}

\section{Related Work} \label{sec:related}

\subsection{Self-Supervised Learning}
Self-supervised learning (SSL)~\citep{simclr} aims to learn transferable representations without manual annotations by leveraging the intrinsic structure and invariances of the data. This is typically achieved by enforcing consistency between representations of different augmented views of the same sample, while separating those of distinct samples.
A fundamental challenge shared by most SSL objectives~\citep{byol} is to avoid representation collapse, in which all inputs are mapped to a constant embedding. Existing methods address this issue through a variety of architectural and optimization strategies.

Contrastive methods, such as SimCLR~\citep{simclr} and MoCo~\citep{mocov1}, rely on negative samples to enforce discriminative representations. Momentum-based frameworks, such as MoCo or DINO~\citep{dinov1}, further stabilize the training with an exponential moving average (EMA) teacher.
Non-contrastive approaches, including BYOL~\citep{byol} and SimSiam~\citep{simsiam}, eliminate the need for negative samples but introduce explicit asymmetries during training, such as stop-gradient operations or additional predictor networks.
Decorrelation-based approaches, such as Barlow Twins~\citep{barlowtwin} and VICReg~\citep{vicreg}, prevent collapse by explicitly penalizing correlations across embedding dimensions.
Beyond the choice of the objective, DINO shows that architectural design also plays a critical role in training stability. In particular, the use of register tokens in Vision Transformers helps maintain feature diversity and mitigates collapse by allocating dedicated capacity for global information~\citep{dinov2, dinov3}.

Although these strategies have demonstrated strong empirical performance, they rely primarily on heuristic mechanisms to prevent collapse.
In most cases, the structure of the optimal representation is implicitly defined by training dynamics or data augmentations rather than explicitly characterized. 
As a result, many SSL objectives enforce invariances that are sufficient for good performance, but whose optimality properties remain only partially understood.

\subsection{JEPA and LeJEPA}
Joint Embedding Predictive Architectures (JEPAs), introduced by \cite{JepaLecun}, proposes a representation learning framework based on predictability in the latent space. 
Rather than relying on pixel-level matching or contrastive objectives, a JEPA learns representations by predicting the latent representation of one view (or future state) from another, thereby emphasizing the structure and dynamics of the representation itself. 
In this sense, JEPAs can be viewed as a generalization of earlier non-contrastive methods such as BYOL~\citep{byol} and SimSiam~\citep{simsiam}, which also rely on direct prediction in a latent space.

LeJEPA~\citep{balestriero2025lejepaprovablescalableselfsupervised} instantiates the JEPA framework by retaining its core latent prediction objective. 
Like BYOL or SimSiam, it learns representations by predicting the embedding of one view from another in the latent space. 
Overall, LeJEPA is composed of two complementary terms:
(i) a predictive loss, similar to BYOL/SimSiam, implemented as a mean squared error between representations, which promotes alignment across views; (ii) a distributional regularization term that constrains the geometry of the representation space.

Specifically, this regularization encourages the learned embeddings to match a prescribed isotropic Gaussian distribution.

This constraint is implemented via random one-dimensional projections of the embeddings, leveraging the Cramér--Wold theorem~\citep{CuestaAlbertos2007CramerWold}. 
For each view $v\in\{1,\ldots,V_a\}$, consider a mini-batch of embeddings $\{ z_{n,v}\}_{n=1}^N\subseteq\mathbb R^{d}$. 
At each training step, LeJEPA draws a finite set of directions $\mathcal A=\{a_1,\dots, a_{|\mathcal{A}|}\}$ with $a\sim \mathrm{Unif}(\mathbb S^{d-1})$, and forms the sliced variables
$t_{n,v}^{a} \coloneqq a^\top  z_{n,v}\in\mathbb R$.
It then enforces, for each view $v$ and each direction $a \in \mathcal{A}$ (with $\mathcal A$ resampled at every mini-batch), that the empirical distribution of $\{t_{n,v}^{a}\}_{n=1}^N$ matches a standard normal $\mathcal{N}(0,1)$ via the Epps--Pulley test. 
By the Cramér--Wold theorem, matching the one-dimensional projections to $\mathcal N(0,1)$ across uniform directions enforces an isotropic Gaussian embedding distribution in $\mathbb R^d$.

\subsection{Epps--Pulley test}\label{subsec_EP}

LeJEPA uses the test by~\citet{EPTest} (EP) as a univariate statistical discrepancy based on characteristic functions. 
Given a scalar random variable $X$ with samples $\{x_j\}_{j=1}^N$ and a reference distribution $Y$ with characteristic function $\varphi_Y(t)=\mathbb E[e^{itY}]$, the empirical characteristic function of $X$ is $\hat{\varphi}_X(t)=\frac{1}{N}\sum_{j=1}^N e^{it x_j}$. 
The EP discrepancy is defined as a weighted $L^2$ distance between characteristic functions:
\begin{equation}
\mathrm{EP}(X,Y) = N\int_{\mathbb R}\bigl|\hat{\varphi}_X(t)-\varphi_Y(t)\bigr|^2\,w(t)\,dt,
\end{equation}
where $w(t)$ is a Gaussian weight function. 
In LeJEPA, this test is applied to the projected scalars $x_j=t_{j,v}^a=a^\top z_{j,v}$, with $Y\sim\mathcal N(0,1)$, and averaged over a finite set of directions $a\in\mathcal A$.

\subsection{Distributional Regularization of Latent Representations}

In addition to LeJEPA, other recent self-supervised learning methods incorporate explicit regularization of the latent space to prevent collapse and shape the geometry of the learned representations~\citep{koleo,vicreg,dinov1}.
These regularizers can be grouped according to the target distribution they enforce on the embeddings.

A first group of methods targets an isotropic Gaussian distribution in $\mathbb{R}^d$. 
VICReg~\citep{vicreg} enforces this only through low-order statistics: it maintains non-zero variance in each coordinate and decorrelates coordinates across the batch, promoting isotropy at the covariance level.
LeJEPA enforces a stronger, distribution-level constraint by matching the embedding distribution to an isotropic Gaussian via random one-dimensional projections and the Epps--Pulley test.

A second group targets uniformity on the unit hypersphere. KoLeo~\citep{koleo}, notably employed in DINO, encourages uniform coverage by maximizing the mini-batch estimated differential entropy. However, its reliance on minimum distances and logarithmic operations can induce numerical instability.

The choice of a regularizer poses a fundamental question regarding downstream performance: Do different distributional constraints yield representations that inherently favor specific estimators, such as linear models, kernel methods, or $k$-nearest neighbors?

\section{Optimality Criteria for Self-Supervised Representations}
\label{OptimalityRepresentation}

A central challenge in self-supervised learning (SSL) is to learn
representations that transfer efficiently to a broad range of downstream
tasks with minimal supervision~\citep{simclr}. Since downstream tasks are
unknown at training time, optimizing representations for a single task
may lead to poor transferability. Following LeJEPA, we adopt a \emph{minimax} perspective: we seek a representation that minimizes the worst-case prediction error over a class of downstream targets.

Representation quality is commonly assessed through two classical
downstream protocols: linear ridge regression~\citep{scholkopf2002learning}
and \(k\)-nearest neighbors (\(k\)-NN)~\citep{hastie2009elements}.

\paragraph{Linear ridge regression.}
This criterion probes the \emph{global structure} of the representation space. Given a dataset of representations and targets
\(\{(z_i,y_i)\}_{i=1}^N\), it estimates a linear predictor \(g_w(z)=\langle w,z\rangle\)
by minimizing the regularized empirical risk
\[
\widehat w_\lambda
=
\arg\min_{w\in\mathbb R^d}
\frac1N
\sum_{i=1}^N
(y_i-\langle w,z_i\rangle)^2
+
\lambda\|w\|^2,
\qquad
\lambda>0.
\]

\paragraph{\(k\)-nearest neighbors.}
Conversely, $k$-NN is sensitive to the \emph{local geometry}, as its prediction bias depends heavily on the neighborhood shape and the local sampling density of the data. Together, they capture complementary aspects of representation quality.

\paragraph{Minimax formulation.}
Let \(p\) denote the marginal distribution of the representations
\(z=f_\theta(x)\in\mathbb R^d\). Given a downstream estimator
\(\widehat g\) trained on samples drawn from \(p\), we measure its
performance through the integrated squared bias (ISB)
\[
\mathrm{ISB}(g,p)
=
\int
\bigl(
\mathbb E[\widehat g(z)]-g(z)
\bigr)^2
\,p(z)\,\mathrm{dvol}(z).
\]

Given a class \(\mathcal G\) of downstream targets, we consider the
minimax problem
\[
p^\star
=
\arg\min_p
\sup_{g\in\mathcal G}
\mathrm{ISB}(g,p).
\]

Different downstream protocols induce different geometric constraints on
the optimal representation distribution.
\begin{proposition}[Linear ridge regression, \cite{balestriero2025lejepaprovablescalableselfsupervised}]
\label{prop:lrr}
Consider linear ridge regression with downstream target class
\[
\mathcal G
=
\left\{
g_w(x)=\langle w,x\rangle
\;:\;
\|w\|\le1
\right\}.
\]

Under the constraint \(\mathbb E[\|X\|^2]=1\), minimizing the worst-case ISB is equivalent to minimizing the top eigenvalue of the covariance matrix of \(p\). Consequently, any optimal
distribution is isotropic.
\end{proposition}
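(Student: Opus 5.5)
We compute the expected ridge estimator explicitly and then reduce the worst case over $\|w\|\le1$ to a spectral problem. Write $\Sigma=\mathbb E_p[XX^\top]$ for the second-moment matrix; we take $p$ centered, or treat $\Sigma$ as the uncentered covariance, as in LeJEPA. The constraint reads $\operatorname{tr}\Sigma=1$. We work in the population (large-$N$) regime used by LeJEPA, where the ridge solution is $\widehat w_\lambda=(\Sigma+\lambda I)^{-1}\mathbb E[Xy]$. For a noiseless or zero-mean-noise target $y=\langle w,X\rangle+\varepsilon$ this gives
\[
\mathbb E[\widehat w_\lambda]=(\Sigma+\lambda I)^{-1}\Sigma w .
\]
The bias vector is therefore
\[
\mathbb E[\widehat w_\lambda]-w=-\lambda(\Sigma+\lambda I)^{-1}w .
\]
For finite $N$ one can replace $\Sigma$ by its empirical counterpart and condition on the design; the argument below goes through verbatim.

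Substituting into the ISB gives
\[
\mathrm{ISB}(g_w,p)=\int\langle \mathbb E[\widehat w_\lambda]-w,z\rangle^2p(z)\,dz=\lambda^2\, w^\top(\Sigma+\lambda I)^{-1}\Sigma(\Sigma+\lambda I)^{-1}w .
\]
Diagonalize $\Sigma$ with eigenvalues $\sigma_1\ge\dots\ge\sigma_d\ge0$. The middle matrix then has eigenvalues $h(\sigma_i)$, where $h(s)=\lambda^2 s/(s+\lambda)^2$. The supremum over the unit ball is the largest eigenvalue, so
\[
\sup_{g\in\mathcal G}\mathrm{ISB}(g,p)=\max_i h(\sigma_i).
\]

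This step is the main obstacle. The function $h$ is not monotone: it increases on $[0,\lambda]$ and decreases afterwards. The phrase ``equivalent to minimizing the top eigenvalue'' therefore needs a regime assumption. I would work in the regime where all $\sigma_i\le\lambda$, for instance $\lambda\ge1$, which is automatic since $\sigma_i\le\operatorname{tr}\Sigma=1$. On that interval $h$ is increasing, so $\max_i h(\sigma_i)=h(\sigma_1)$, and minimizing the worst-case ISB is equivalent to minimizing $\sigma_1=\lambda_{\max}(\Sigma)$.

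Under $\operatorname{tr}\Sigma=1$ we have $\sigma_1\ge \frac1d\sum_i\sigma_i=\frac1d$. Equality holds if and only if all $\sigma_i=1/d$, that is, $\Sigma=I/d$. Any minimizer is therefore isotropic.

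If one wants to avoid the regime restriction, I would follow LeJEPA and state the equivalence in terms of the operator $\Sigma$ in the small-bias (large-$\lambda$) expansion. There $h(s)\approx s$ to first order in $s/\lambda$, so the worst-case ISB again reduces to $\lambda_{\max}(\Sigma)$ and the same trace argument concludes.
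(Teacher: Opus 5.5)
The paper does not prove this proposition itself; it attributes it to LeJEPA. The closest in-paper argument is the kernel ridge regression appendix (the spectral-form lemma and the proposition after it). Its first claim, specialized to the linear kernel $K(x,y)=x^\top y$, is this statement, because the nonzero eigenvalues of $T_p$ on $L^2(p)$ are those of $\Sigma=\mathbb E_p[XX^\top]$.

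Your skeleton matches that argument. You compute the population bias, diagonalize, reduce the supremum to $\max_i h(\sigma_i)$, and finish with $\sigma_1\ge\operatorname{tr}\Sigma/d$, with equality iff $\Sigma=I/d$. That is the natural finish for the linear kernel; the symmetrization step in the appendix is only needed for the exponential kernel.

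The genuine difference is the target class. The paper uses the source class $g=T_pf$ with $\|f\|_{L^2(p)}\le 1$. For the linear kernel, $T_pf=g_w$ with $w=\mathbb E[Xf(X)]$, so $w$ ranges over the ellipsoid $\Sigma^{1/2}\{u:\|u\|\le 1\}$ rather than the unit ball. The extra factor of $\Sigma$ turns your eigenvalue function $\lambda^2 s/(s+\lambda)^2$ into $\bigl(\lambda s/(s+\lambda)\bigr)^2$. That function is strictly increasing on $[0,\infty)$, so the equivalence with minimizing $\sigma_1$ holds for every $\lambda>0$.

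You keep the class $\|w\|\le 1$ as literally stated, and then your restriction $\lambda\ge 1$ is necessary, not a convenience. Null directions of $\Sigma$ carry zero $p$-weighted bias. So the collapsed design $\Sigma=e_1e_1^\top$ has worst-case ISB $\lambda^2/(1+\lambda)^2$, which is strictly below the isotropic value $h(1/d)$ whenever $\lambda<1/\sqrt d$. Your route therefore proves the statement as written, under the kind of assumption it actually needs. The paper's route gets an unconditional result by changing the class of targets.

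Two side claims in your proposal overreach:
\begin{itemize}
\item Your closing paragraph does not remove the regime restriction. The large-$\lambda$ expansion is the same restriction in another form; the way to remove it is the source-class change above.
\item The finite-$N$ remark does not go through verbatim. If the ISB is still integrated against $p$, the relevant matrix is $\lambda^2(\hat\Sigma+\lambda I)^{-1}\Sigma(\hat\Sigma+\lambda I)^{-1}$. There $\hat\Sigma$ and $\Sigma$ need not commute, and the constraint is on $\Sigma$ rather than $\hat\Sigma$, so the spectral reduction does not carry over. The paper avoids this by working with the population operator from the start.
\end{itemize}
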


\begin{proposition}[\(k\)-NN]
\label{prop:knn}
Let \(p\) be a \(\mathcal C^3\) density supported on a smooth dimensional Riemannian manifold \(\mathcal M\). Consider the class
of downstream targets
\[
\mathcal G_c
=
\left\{
g\in\mathcal C^3(\mathcal M)
\;:\;
\|\Delta_\mathcal M g\|_\infty\le c
\right\},
\qquad
c>0.
\]

Then the worst-case ISB of \(k\)-NN regression over \(\mathcal G_c\) is
minimized when \(p\) is uniform on \(\mathcal M\).
\end{proposition}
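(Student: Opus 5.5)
We follow LeJEPA's Euclidean $k$-NN argument, carried out in geodesic normal coordinates on $\mathcal M$. Fix a query point $z\in\mathcal M$ with $\mathcal M$ of dimension $m$. Let $r_k(z)$ be the $k$-NN radius, so that asymptotically $p(z)\,\mathrm{vol}(B(z,r_k))\approx k/N$. We write the $k$-NN estimate as the average of $g$ over the neighbors. In expectation, this is the conditional mean of $g$ over the geodesic ball $B(z,r)$ under the density $p$, with $r=r_k(z)$.

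The first step is a second-order expansion in normal coordinates $u\in T_z\mathcal M$. We write $g(\exp_z u)=g(z)+\langle\nabla g,u\rangle+\tfrac12 u^\top\nabla^2 g\,u+O(|u|^3)$ and $p(\exp_z u)=p(z)+\langle\nabla p,u\rangle+O(|u|^2)$, and note that the volume element is $1+O(|u|^2)$ (curvature correction). Odd moments of $u$ over the symmetric ball vanish, and $\int_{B_r}u u^\top\,du=\frac{r^2}{m+2}\mathrm{vol}(B_r)\,I$. This gives
\begin{equation}
\mathbb E[\widehat g(z)]-g(z)=\frac{r_k(z)^2}{2(m+2)}\Bigl(\Delta_{\mathcal M}g(z)+\frac{2\langle\nabla g(z),\nabla p(z)\rangle}{p(z)}\Bigr)+o(r_k^2).
\end{equation}
The curvature terms enter only at order $r^2$ multiplied by $g(z)$ in both the numerator and the normalizing integral, so they cancel; this is where the $\mathcal C^3$ assumption is used. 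The second step substitutes $r_k(z)^2\propto (k/(N\omega_m p(z)))^{2/m}$, which yields an asymptotic ISB of the form
\begin{equation}
\mathrm{ISB}(g,p)\approx C_{k,N,m}\int_{\mathcal M}p^{1-4/m}\Bigl(\Delta_{\mathcal M}g+2\langle\nabla g,\nabla\log p\rangle\Bigr)^2\,\mathrm{dvol}.
\end{equation}

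The third step is the minimax argument. The class $\mathcal G_c$ constrains only $\Delta_{\mathcal M} g$, not $\nabla g$. If $\nabla\log p\neq0$ on an open set, we can add to $g$ a function with small Laplacian but large gradient aligned with $\nabla\log p$: we take $g+\lambda h$ with $h$ a bump-type function whose Laplacian is bounded while $\langle\nabla h,\nabla\log p\rangle$ is bounded away from zero on a set of positive measure. Letting $\lambda$ grow, after rescaling so that $\|\Delta_{\mathcal M}(\cdot)\|_\infty\le c$ still holds, drives the supremum to $+\infty$, or at least above the uniform value. This step is the main obstacle. For a fixed bounded Laplacian, the gradient is controlled by elliptic estimates on a compact manifold, so "arbitrarily large" may fail. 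I would therefore first try a comparison argument instead: take the worst case $g$ for the uniform density and show that any non-constant $p$ admits $g$ achieving at least as large a value. With $\nabla p=0$, the sup equals $c^2 C\,\mathrm{vol}(\mathcal M)^{4/m}\cdot$const, and we need $\int p^{1-4/m}$ to be minimized by uniform $p$ under $\int p=1$. This holds by Jensen, since $t\mapsto t^{1-4/m}$ is concave for $m\ge4$. For small $m$ we would rely on the gradient (drift) term; for the uniform density that term vanishes identically, so the only remaining comparison is the moment functional.

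Finally, we combine the steps. For uniform $p$, the drift term vanishes and $\sup_{g}\mathrm{ISB}=C c^2\,\mathrm{vol}(\mathcal M)^{4/m}$, attained by $g$ with $|\Delta g|=c$ almost everywhere, for example by approximation. For any other $p$, we take the same $g$ and choose its sign and gradient orientation so that the cross term $2\Delta g\,\langle\nabla g,\nabla\log p\rangle$ is nonnegative; flipping $g\mapsto -g$ together with a reflection ensures one of the two choices does not decrease the value. We then apply the Jensen bound to the remaining $p^{1-4/m}$ factor, which yields $\sup_g\mathrm{ISB}(g,p)\ge\sup_g\mathrm{ISB}(g,\mathrm{unif})$.
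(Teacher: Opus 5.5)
\textbf{Gap in the minimax step.} Your first two steps reproduce the paper's bias lemma and its leading-order formula $\mathrm{ISB}(g,p)=A(m,k,n)\int_{\mathcal M}(\Delta_{\mathcal M}g+2\langle\nabla g,\nabla p\rangle/p)^2p^{1-4/m}\,\mathrm{dvol}$. The comparison argument you substitute for the minimax step fails in two places.

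First, the Jensen step is reversed. For $m>4$ the exponent $1-4/m$ lies in $(0,1)$, so $t\mapsto t^{1-4/m}$ is concave. Hence $\int_{\mathcal M}p^{1-4/m}\,\mathrm{dvol}\le\mathrm{vol}(\mathcal M)^{4/m}$, with equality only for the uniform density: the uniform density \emph{maximizes} the pure-wiggliness contribution. The convex, favorable case is $m<4$ (and $m=4$ gives a constant), so you have the two regimes exchanged.

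Second, the sign choice cannot make the cross term nonnegative. $\mathrm{ISB}(g,p)$ is a quadratic form in $g$, so $g\mapsto-g$ leaves $\Delta_{\mathcal M}g\,\langle\nabla g,\nabla\log p\rangle$ unchanged, and a general pair $(\mathcal M,p)$ has no reflection to exploit. What does work is to take $\Delta_{\mathcal M}g$ to be a high-frequency, mean-zero pattern with $|\Delta_{\mathcal M}g|\approx c$. This drives $\nabla g$ uniformly to $0$ and yields $\sup_g\mathrm{ISB}(g,p)\ge A(m,k,n)\,c^2\int_{\mathcal M}p^{1-4/m}\,\mathrm{dvol}$. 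With the correctly oriented Jensen inequality, this proves the claim for $m\le 4$, but not for $m>4$.

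\textbf{Comparison with the paper.} The paper's proof follows exactly the route you set aside. It bounds the uniform case by $A(m,k,n)c^2\mathrm{vol}(\mathcal M)^{4/m}$. For non-constant $p$, it asserts that $f$ can be taken locally almost affine along $\nabla p$ with arbitrarily large slope while $\|\Delta_{\mathcal M}f\|_\infty\le c$, so that $\sup_f\mathrm{ISB}(f,p)=+\infty$.

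Your objection to that step is correct when $\mathcal M$ is closed, in particular for the sphere. The Green's function representation gives $\|\nabla f\|_\infty\le C_{\mathcal M}\|\Delta_{\mathcal M}f\|_\infty$, so the divergence argument is only valid when $\mathcal G_c$ contains functions of unbounded gradient (e.g.\ $f=\lambda\langle v,x\rangle$ on a bounded Euclidean domain).

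Moreover, for closed $\mathcal M$ with $m>4$ the conclusion itself fails within this leading-order formalism. Using that gradient bound, take a von Mises--Fisher density with concentration $\kappa\to\infty$. It has $\int p^{1-4/m}\,\mathrm{dvol}=O(\kappa^{-2})$ and $\int p^{1-4/m}|\nabla\log p|^2\,\mathrm{dvol}=O(\kappa^{-1})$, so its worst-case ISB tends to $0$, below the uniform value. A complete argument therefore needs either the unbounded-gradient setting the paper implicitly assumes, or an additional normalization on $p$ that rules out such concentration.
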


The previous two criteria are not sufficient to fully constrain the
optimal geometry of the representation distribution. We therefore
introduce a third minimax criterion based on kernel ridge regression~\cite{scholkopf2002learning}
(KRR) with the exponential dot-product kernel
\[
K(x,y)=e^{\kappa x^\top y},
\qquad
\kappa>0.
\]

Let \(T_p\) denote the associated population covariance operator (see Appendix~\ref{app:worstcasekrr}) and consider the source class
\[
\mathcal G_p
=
\left\{
g=T_pf
\;:\;
\|f\|_{L^2(p)}\le1
\right\}.
\]

\begin{proposition}[Exponential KRR]
\label{prop:expkrr}
For kernel ridge regression with kernel
\[
K(x,y)=e^{\kappa x^\top y},
\]
the worst-case ISB over \(\mathcal G_p\) is minimized by
\[
p^\star
=
\mathrm{Unif}(\mathbb S^{d-1}).
\]
\end{proposition}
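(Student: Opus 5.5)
Our plan is to reduce the worst-case ISB to a spectral quantity of the covariance operator, as in Proposition~\ref{prop:lrr}, and then use rotational symmetry on the sphere to identify the minimizer.

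\textbf{Step 1: spectral form of the bias.} We work in the population limit and define $T_p f(x)=\int K(x,y)f(y)\,p(y)\,\mathrm{dvol}(y)$ on $L^2(p)$. This operator is compact, self-adjoint and positive, with eigenpairs $(\mu_j,\phi_j)$. The population KRR estimator with ridge $\lambda$ is $\widehat g=(T_p+\lambda)^{-1}T_p g$, so its bias is $-\lambda(T_p+\lambda)^{-1}g$. For a target $g=T_pf$ this gives
\[
\mathrm{ISB}(g,p)=\lambda^2\bigl\|(T_p+\lambda)^{-1}T_p f\bigr\|_{L^2(p)}^2=\sum_j \frac{\lambda^2\mu_j^2}{(\mu_j+\lambda)^2}\,\langle f,\phi_j\rangle^2 .
\]
The function $\mu\mapsto \lambda\mu/(\mu+\lambda)$ is increasing. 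Taking the supremum over $\|f\|\le 1$ therefore yields
\[
\sup_{g\in\mathcal G_p}\mathrm{ISB}=\Bigl(\frac{\lambda\mu_{\max}(p)}{\mu_{\max}(p)+\lambda}\Bigr)^2 .
\]
Minimizing the worst-case ISB is thus equivalent to minimizing the top eigenvalue $\mu_{\max}(p)=\|T_p\|_{\mathrm{op}}$.

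\textbf{Step 2: a lower bound on $\mu_{\max}$.} We take $p$ supported on $\mathbb S^{d-1}$, since embeddings are normalized. The Rayleigh quotient with the test function $f\equiv1$ gives
\[
\mu_{\max}(p)\ge \iint e^{\kappa x^\top y}\,p(x)p(y) .
\]
Expanding the exponential, $e^{\kappa x^\top y}=\sum_n \frac{\kappa^n}{n!}(x^\top y)^n$, where each power $(x^\top y)^n=\langle x^{\otimes n},y^{\otimes n}\rangle$ is positive definite. Equivalently, by Funk--Hecke, $K$ has a nonnegative Gegenbauer expansion $K=\sum_\ell a_\ell\sum_m Y_{\ell m}(x)Y_{\ell m}(y)$ with $a_\ell>0$. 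The double integral then equals
\[
\iint e^{\kappa x^\top y}\,p(x)p(y)=a_0+\sum_{\ell\ge1}a_\ell\sum_m\Bigl(\int Y_{\ell m}\,p\Bigr)^2\ \ge\ a_0 .
\]
Equality holds iff every spherical-harmonic moment of degree $\ell\ge1$ vanishes, that is, iff $p$ is uniform. Hence $\mu_{\max}(p)\ge a_0$ for every $p$, with the first inequality strict when $p$ is non-uniform.

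\textbf{Step 3: the uniform distribution attains the bound.} For $p=\mathrm{Unif}(\mathbb S^{d-1})$, Funk--Hecke diagonalizes $T_p$: the eigenvalues are exactly the coefficients $a_\ell$ (with the normalization of the uniform measure), and the eigenspaces are the spherical harmonics. We then need $\mu_{\max}=a_0$, i.e.\ $a_0\ge a_\ell$ for all $\ell$. We will prove this via the explicit formula
\[
a_\ell\propto \int_{-1}^1 e^{\kappa t}\,\frac{C_\ell^{(d-2)/2}(t)}{C_\ell^{(d-2)/2}(1)}\,(1-t^2)^{(d-3)/2}\,dt .
\]
Since $|C_\ell(t)|\le C_\ell(1)$ and $e^{\kappa t}>0$, this integral is at most the $\ell=0$ integral, so $a_\ell\le a_0$. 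Equivalently, $a_\ell\propto I_{\ell+\nu}(\kappa)/\kappa^\nu$ with $\nu=(d-2)/2$, and modified Bessel functions decrease in their order. Combining Steps 2 and 3 gives $\mu_{\max}(p)\ge a_0=\mu_{\max}(\mathrm{Unif})$, and the claim follows from Step 1.

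The main obstacle is Step 2 if $p$ is allowed off the sphere. In that case the constant $a_0$ is not a uniform lower bound, since $e^{\kappa x^\top y}$ can be made small by concentrating $p$ near the origin. We would therefore state the result under the normalization constraint, the analogue of $\mathbb E\|X\|^2=1$ in Proposition~\ref{prop:lrr}, and restrict $p$ to the sphere, which matches the SPHERE-JEPA setup.
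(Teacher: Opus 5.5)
Your argument is correct for laws supported on $\mathbb S^{d-1}$. After the shared first step (your Step 1 is the paper's spectral lemma), it takes a different route.

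The paper uses no harmonic analysis. It notes that $p\mapsto\mu_1(T_p)$ is convex (a supremum of linear functionals via a feature map of $K$) and that $K$ is $O(d)$-invariant, so Haar-averaging $p$ cannot increase $\mu_1$. It then restricts to rotation-invariant laws $X=RU$ with $R$ independent of $U$. For an independent copy $X'=SV$, Jensen gives $\mathbb E[e^{\kappa X^\top X'}]=\sum_m c_m(\mathbb E R^{2m})^2\ge\sum_m c_m$, with equality iff $R=1$ a.s.

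Your Funk--Hecke expansion makes the symmetrization unnecessary. Angular non-uniformity appears directly as the nonnegative term $\sum_{\ell\ge1}a_\ell\sum_m(\int Y_{\ell m}p)^2$, which also gives uniqueness at once.

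Your Step 3 supplies something the paper's proof needs but omits. The paper only lower-bounds the Rayleigh quotient $\langle T_p1,1\rangle=\mathbb E[e^{\kappa X^\top X'}]$. Concluding that the uniform law minimizes $\mu_1$ also requires $\mu_1(\mathrm{Unif}(\mathbb S^{d-1}))=\mathbb E[e^{\kappa U^\top V}]$, i.e.\ that constants are the top eigenfunction. That is exactly your bound $a_\ell\le a_0$. (Minor slip: the inequality that is strict for non-uniform $p$ is $\iint K\,p\,p\ge a_0$, not the Rayleigh bound.)

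What the paper's route buys is scope, and here your proposal falls short. The paper's admissible class is every $p$ with $\int\|x\|^2p\,\mathrm{dvol}=1$, not necessarily on the sphere. It therefore \emph{derives} that the optimizer lies on the sphere, which is the content of the claim that KRR forces hyperspherical uniformity. By restricting to the sphere you assume that half.

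The off-sphere case is not a real obstacle once $\mathbb E\|X\|^2=1$ is imposed, since this normalization is what blocks the collapse toward the origin you mention. Your own expansion handles it:
\begin{itemize}
\item Write $X=RU$, $X'=SV$ with $R=\|X\|$; no independence of $R$ from $U$ is needed.
\item Expand $(X^\top X')^n=(RS)^n(U^\top V)^n$ using the nonnegative Gegenbauer coefficients of $t^n$.
\item Every $\ell\ge1$ contribution is then a nonnegative multiple of $\sum_m\bigl(\mathbb E[R^nY_{\ell m}(U)]\bigr)^2$.
\item The $\ell=0$ part is $\sum_k\frac{\kappa^{2k}}{(2k)!}\mathbb E[(U^\top V)^{2k}]\,(\mathbb E R^{2k})^2\ge a_0$, since Jensen gives $\mathbb E R^{2k}\ge(\mathbb E R^2)^k=1$.
\item Equality forces $\mathbb E R^4=(\mathbb E R^2)^2$, hence $R=1$ a.s., and then all harmonic moments vanish.
\end{itemize}
Together with your Step 3, this proves the paper's full statement without the symmetrization step.
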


Combining the three minimax criteria yields a unique optimal geometry:

\begin{theorem}[Optimal representation geometry]
\label{thm:optimality}
The uniform distribution on the hypersphere
\[
\mathrm{Unif}(\mathbb S^{d-1})
\]
simultaneously minimizes the worst-case ISB for linear ridge regression,
\(k\)-NN regression, and exponential-kernel ridge regression.
\end{theorem}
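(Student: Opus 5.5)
The argument combines the three preceding propositions, so the plan is to check that $\mathrm{Unif}(\mathbb S^{d-1})$ lies in the optimal set of each one. \textbf{Exponential KRR} is immediate: Proposition~\ref{prop:expkrr} asserts directly that $p^\star=\mathrm{Unif}(\mathbb S^{d-1})$ minimizes the worst-case ISB over $\mathcal G_p$, so nothing further is needed there.

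\textbf{Linear ridge regression.} I would invoke Proposition~\ref{prop:lrr}, which reduces the problem, under the normalization $\mathbb E[\|X\|^2]=1$, to minimizing $\lambda_{\max}(\Sigma_p)$. Since $\operatorname{tr}\Sigma_p\le \mathbb E\|X\|^2=1$, and the uncentered second moment matrix has trace exactly $1$, we get $\lambda_{\max}\ge 1/d$, with equality exactly for isotropic distributions. For $X\sim\mathrm{Unif}(\mathbb S^{d-1})$ we have $\|X\|=1$ almost surely, so the constraint holds. Rotation invariance gives $\mathbb E[X]=0$ and $\mathbb E[XX^\top]=\tfrac1d I$, hence $\Sigma=\tfrac1d I$ attains the lower bound. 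One point needs care: the constraint concerns second moments, while the ISB in the LeJEPA argument may involve the centered covariance. Because the mean vanishes for the uniform sphere, both versions coincide, so the bound is attained in either reading.

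\textbf{$k$-NN.} Apply Proposition~\ref{prop:knn} with $\mathcal M=\mathbb S^{d-1}$, which is a compact smooth Riemannian manifold. Its uniform measure is the normalized Riemannian volume, which has a constant and hence $\mathcal C^3$ density. So the uniform distribution minimizes the worst-case ISB among $\mathcal C^3$ densities on the sphere.

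\textbf{Main obstacle.} The hard part is the scope of ``simultaneously'', because the three propositions optimize over different admissible classes. Proposition~\ref{prop:knn} compares densities on a fixed manifold, Proposition~\ref{prop:lrr} compares all distributions with unit second moment, and Proposition~\ref{prop:expkrr} concerns the normalized embeddings. I would resolve this by stating the theorem over the common admissible class: distributions supported on $\mathbb S^{d-1}$, which all satisfy $\mathbb E\|X\|^2=1$ automatically. On this class, each criterion is minimized by the uniform law, as shown above. I would then add a remark on how the criteria interact. Proposition~\ref{prop:lrr} alone allows any isotropic law, for instance a discrete cross-polytope law on the sphere. That law is excluded by the smooth-density requirement of Proposition~\ref{prop:knn}, while Proposition~\ref{prop:expkrr} independently selects the uniform law. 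So the intersection of the three optimal sets is exactly $\{\mathrm{Unif}(\mathbb S^{d-1})\}$.
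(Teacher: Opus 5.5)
Your plan is the paper's own argument: the paper obtains the theorem simply by combining Propositions~\ref{prop:lrr}, \ref{prop:knn} and~\ref{prop:expkrr}. Your extra bookkeeping makes explicit what the paper leaves implicit: checking the normalization and isotropy of $\mathrm{Unif}(\mathbb S^{d-1})$, specializing the $k$-NN result to $\mathcal M=\mathbb S^{d-1}$, and fixing a common admissible class.

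One step in the linear-ridge part is wrong, however. The claim that the bound ``is attained in either reading'' fails for the centered covariance. From $\mathbb E\|X\|^2=1$ you only get the upper bound $\operatorname{tr}\Sigma_p\le 1$, and an upper bound on the trace gives no lower bound $1/d$ on $\lambda_{\max}(\Sigma_p)$. Concretely, a Dirac mass at a unit vector satisfies the constraint, lies in your class of laws on $\mathbb S^{d-1}$, and has $\Sigma_p=0$. In the centered reading, the criterion of Proposition~\ref{prop:lrr} would therefore rank it strictly above the uniform law, so attaining $1/d$ establishes nothing. That the uniform law has zero mean is irrelevant, because the competitors need not.

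The fix is to commit to the uncentered reading, which is the one the paper's setup actually induces. The ridge problem has no intercept, so at the population level $\widehat w_\lambda=(M+\lambda I)^{-1}Mw$ with $M=\mathbb E[XX^\top]$. This is the same uncentered object as $T_p$ for the linear kernel. A centered reading would only work if the normalization were also centered, i.e.\ $\operatorname{tr}\Sigma_p=1$, which is not the constraint stated. With $\operatorname{tr}M=1$, your bound $\lambda_{\max}(M)\ge 1/d$ is correct, with equality if and only if $M=\tfrac1d I$. It also supplies the sufficiency direction that the wording of Proposition~\ref{prop:lrr} (``any optimal distribution is isotropic'') does not literally give.

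A smaller point concerns your closing remark. Proposition~\ref{prop:expkrr} only says the uniform law is \emph{a} minimizer. The paper's equality analysis ($R=1$ a.s.) is vacuous for laws already supported on the sphere: symmetrizing any such law returns $\mathrm{Unif}(\mathbb S^{d-1})$, so the argument only gives $\mu_1(T_{\mathrm{Unif}})\le\mu_1(T_p)$. So the KRR criterion, as proved, does not ``independently select'' the uniform law. The uniqueness in your intersection has to come from the $\arg\min$ form of the $k$-NN result in the appendix, which gives uniqueness among smooth densities on $\mathbb S^{d-1}$. This does not affect the theorem itself, which only asserts that the uniform law is a simultaneous minimizer.
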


Proposition~\ref{prop:lrr} comes from \cite{balestriero2025lejepaprovablescalableselfsupervised}. The proof of Proposition~\ref{prop:knn} is given in Appendix~\ref{proof:knn}, while the proof of Proposition~\ref{prop:expkrr} is deferred to Appendix~\ref{app:worstcasekrr}.
\section{SPHERE-JEPA: Spherical Prediction with Homogeneous Embeddings} \label{sec:method}

\begin{figure}[t]
  \centering
  \includegraphics[width=0.8\linewidth]{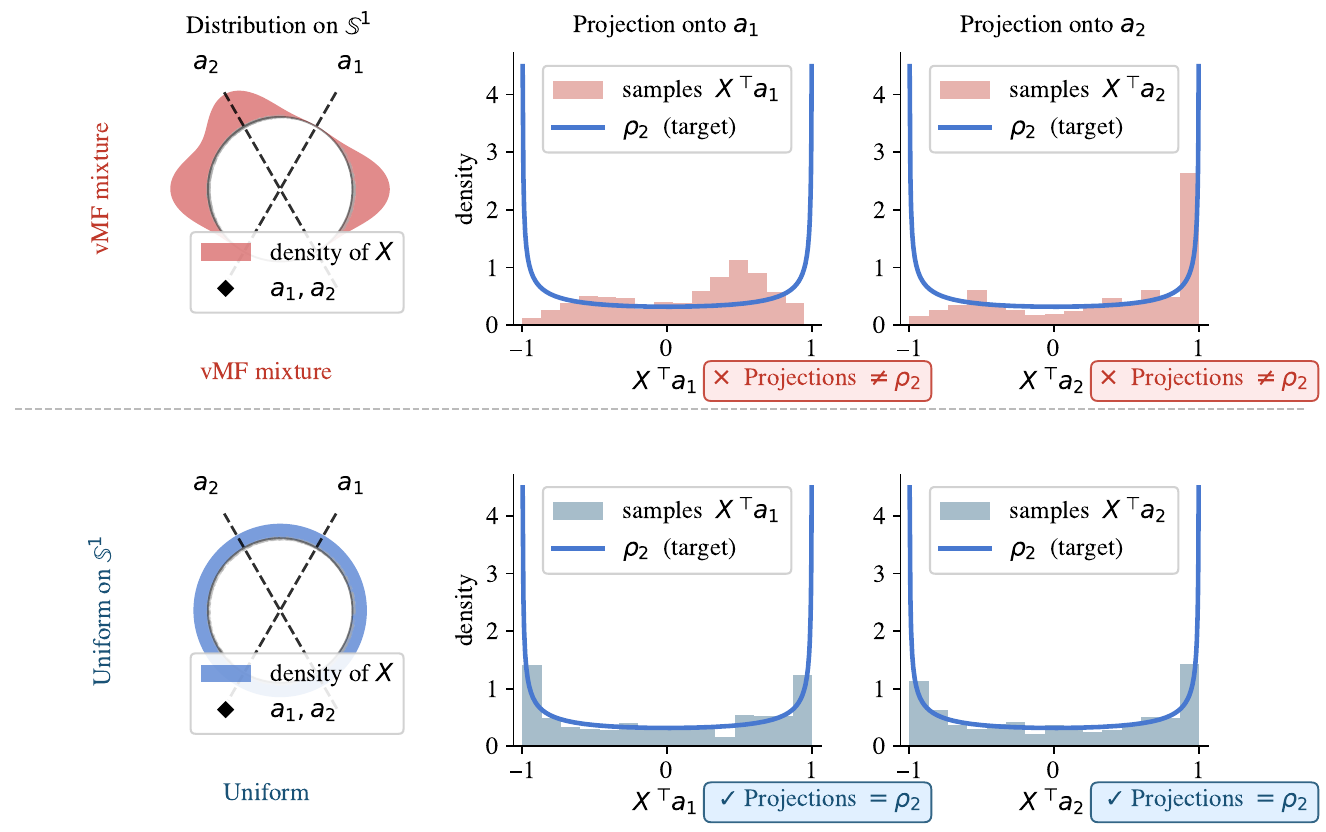}
\caption{
Illustration of the Cramér--Wold characterization underlying SUSReg on $\mathbb{S}^1$.
\textbf{Top row:} a mixture of von Mises--Fisher (vMF) distributions—the spherical analogue of Gaussian distributions—induces a non-uniform distribution with mass concentrated around a few directions. The resulting projections $X^\top a_1$ and $X^\top a_2$ deviate from the target density $\rho_2$.
\textbf{Bottom row:} a uniform distribution on $\mathbb{S}^1$ yields projections that match $\rho_2$ across directions.
Histograms compare the distributions of $a^\top X$ to the target density $\rho_2$ (solid curve). Non-uniform representations (top) produce mismatched projections and are penalized by SUSReg, whereas uniform representations (bottom) satisfy the projection constraint and are therefore not penalized by SUSReg.
}
  \label{fig:fig2}
\end{figure}

\subsection{Sketched Uniform Spherical Regularization (SUSReg)}

We introduce \emph{Sketched Uniform Spherical Regularization} (SUSReg), a projection-based regularizer inspired by SIGReg~\citep{balestriero2025lejepaprovablescalableselfsupervised}.
Let $\{\tilde z_{n,v}\}_{n=1}^N \subseteq \mathbb{S}^{d-1}$ be a normalized minibatch
for view $v$, and let $X_v$ ($X$ when the context is clear) be uniform over this set.
By the Cramér-Wold theorem, $X \sim \mathrm{Unif}(\mathbb{S}^{d-1})$ if and only if every projection $a^\top X$, $a\in\mathbb{S}^{d-1}$, follows the density 
\begin{equation}
  \rho_d(t) = C_d\,(1 - t^2)^{\frac{d-3}{2}}, \qquad t\in(-1,1),
  \label{eq:rhod}
\end{equation}
where $C_d$ is a normalization constant that ensures that $\rho_d$ integrates into one.
For large $d$, $\rho_d$ converges to the density of $\mathcal{N}(0,\frac{1}{d})$ by the central limit theorem~\citep{billingsley1995probability}. We use this Gaussian approximation when $d>256$.
This characterization is illustrated in Figures~\ref{fig:fig2}--\ref{fig:fig3}, which provide an empirical visualization of the Cramér--Wold theorem: a distribution is uniform on the sphere if and only if its one-dimensional projections match $\rho_d$ along all directions. Accordingly, only uniformly distributed representations yield projection distributions consistent with $\rho_d$, while non-uniform distributions lead to systematic, direction-dependent mismatches.

Following SIGReg, SUSReg enforces this condition by penalizing, via the Epps--Pulley test, the mismatch between $a^\top X$ and $Y \sim \rho_d$ over randomly sampled directions:
\begin{equation}
  \mathcal{L}_{\mathrm{SUSReg}}
  \;\coloneqq\;
  \frac{1}{V_a}\sum_{v=1}^{V_a}
  \frac{1}{|\mathcal{A}|}\sum_{a\in\mathcal{A}}
  \mathrm{EP}\!\left(a^\top X,\; Y\right).
  \label{eq:susreg}
\end{equation}
Minimizing this loss enforces that the projected variables $a^\top X$ match the target distribution $\rho_d$ across random directions, thereby enforcing uniformity on the hypersphere and aligning the representation geometry with the optimal structure characterized in Section~\ref{OptimalityRepresentation}.

\begin{figure}[t]
  \centering
  \includegraphics[width=0.8\linewidth]{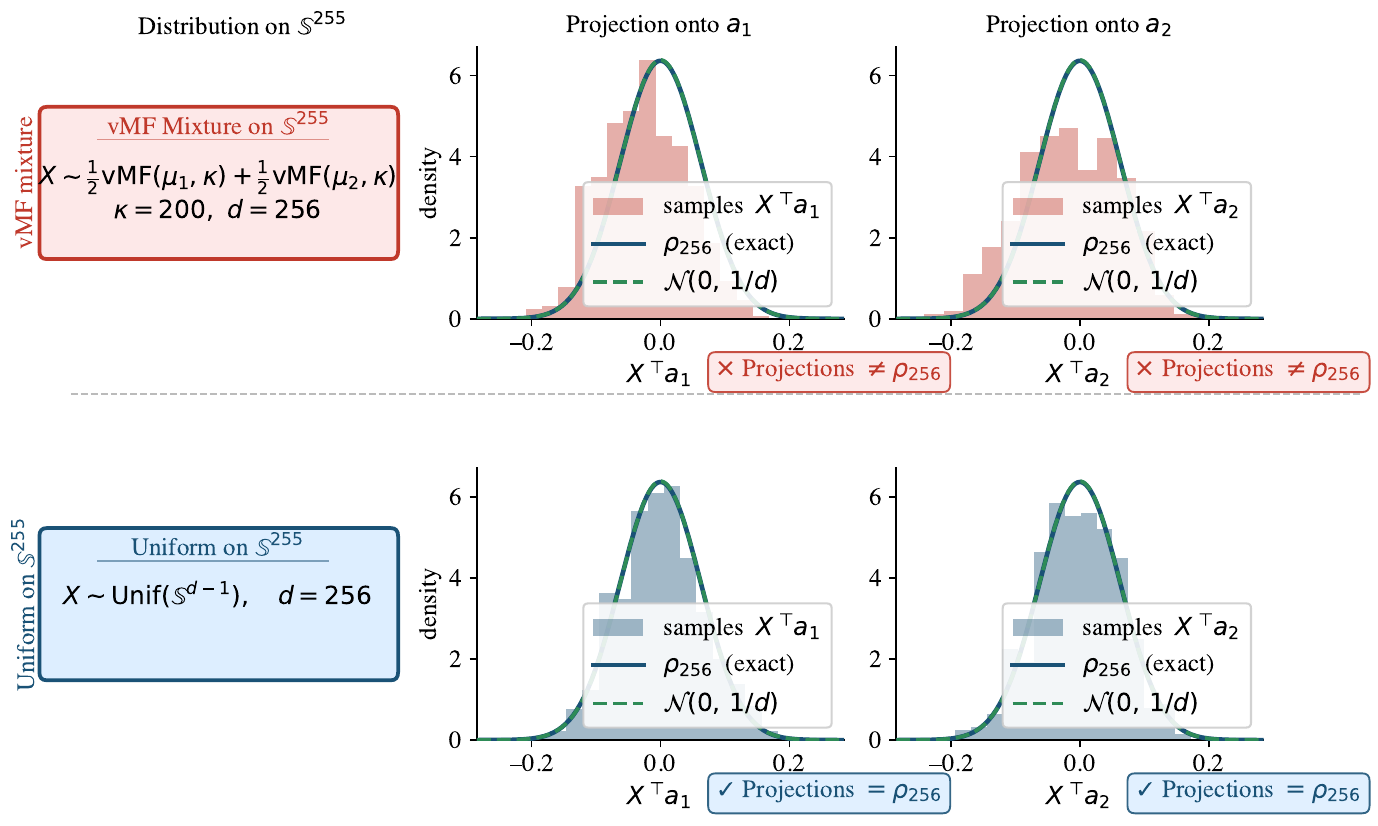}
\caption{
Illustration of the Cramér--Wold characterization underlying SUSReg on $\mathbb{S}^{255}$ ($d=256$).
\textbf{Top row:} a mixture of von Mises--Fisher (vMF) distributions induces a non-uniform distribution with mass concentrated around a few directions. The resulting projections $X^\top a_1$ and $X^\top a_2$ induce distributions that deviate from the target density $\rho_{256}$.
\textbf{Bottom row:} a uniform distribution on $\mathbb{S}^{255}$ yields projections consistent with $\rho_{256}$ across directions.
Histograms compare the distributions of $a^\top X$ to $\rho_{256}$ (solid curve). Non-uniform representations (top) produce mismatched projections and are penalized by SUSReg, whereas uniform representations (bottom) satisfy the projection constraint and are therefore not penalized by SUSReg.
The Gaussian approximation $\mathcal{N}(0, 1/d)$ (dashed) closely matches $\rho_{256}$, justifying its use in high dimensions.}
\label{fig:fig3}
\end{figure}

\subsection{Invariant Prediction Loss on the Hypersphere}

While SUSReg enforces global uniformity on the hypersphere, it does not guarantee invariance to semantic-preserving transformations. To address this, we introduce an alignment objective. Unlike LeJEPA, which applies a mean squared error (MSE) to unnormalized embeddings in $\mathbb{R}^d$, we adapt this principle to normalized representations.

For a sample with multiple views, we define a prototype $\mu_n$ as the average of its global views and align all views to it via the invariance loss $\mathcal{L}_{\mathrm{inv}}$:
\begin{equation} \label{equ:prototype}
\mu_n \;\coloneqq\; \frac{1}{V_g}\sum_{v=1}^{V_g}\tilde z_{n,v}, \qquad \mathcal L_{\mathrm{inv}} \;\coloneqq\; \frac{1}{V_a}\sum_{v=1}^{V_a} \left\|\mu_n - \tilde z_{n,v}\right\|_2^2.
\end{equation}

This MSE on normalized embeddings approximates the squared geodesic distance locally, offering more stable and bounded gradients.

\subsection{SPHERE-JEPA Objective}

Following LeJEPA, our final objective combines the invariance loss (for cross-view alignment) and SUSReg (for global hyperspherical geometry) via a balancing weight $\lambda \in [0,1]$:
\[
\mathcal L_{\mathrm{SPHERE\text{-}JEPA}}
\;=\;
(1-\lambda)\,\mathcal L_{\mathrm{inv}}
\;+\;
\lambda\,\mathcal L_{\mathrm{SUSReg}}.
\]

Unlike LeJEPA's Gaussian prior in $\mathbb{R}^d$, this formulation enforces a uniform distribution on the hypersphere, achieving the optimal geometry identified in Section~\ref{OptimalityRepresentation}.

\section{Experiments} \label{sec:experiments}
\paragraph{Architectures.}
To evaluate across different inductive biases architecture, we use both self-attention and convolutional backbones: a primary Vision Transformer (ViT-B/14)~\citep{dinov2}, and standard ResNet-18/50 models~\citep{he2016deep}. Representations are extracted from the ViT class token, and via global average pooling for the ResNets.

\paragraph{Embedding Head.}
The backbone output is processed by a $3$-layer MLP projection head with GELU activations and hidden dimensions of $[2048, 2048, 256]$.

\paragraph{Training Data and Views.}
We evaluate on ImageNet-1K~\citep{imagenet}, ImageNet-100, and Galaxy10~\citep{galaxy10}. For ViT-B/14 on ImageNet-1K, we use a multi-crop strategy ($V_g=2$ global, $V_l=6$ local; $V_a=8$ total views). ResNets use only two global views ($V_a=V_g=2$). We apply standard augmentations: DINO~\citep{dinov1} and SimSiam~\citep{simsiam} for ViT, and BYOL~\citep{byol} for ResNets.

\paragraph{Optimization.}
Models are trained with AdamW and cosine annealing. On ImageNet-1K, ViT-B/14 is trained for $100$ epochs (5-epoch warmup) with batch size $512$, learning rate (LR) $5 \times 10^{-4}$, and weight decay (WD) $5 \times 10^{-2}$. ResNets are trained for $200$ epochs (1-epoch warmup) with LR $5 \times 10^{-2}$ and WD $5 \times 10^{-4}$. $|\mathcal{A}|$ for SUSReg and SIGReg is fixed to $1024$ across all runs.

\paragraph{EMA Teacher.}
When using an EMA teacher, the target prototype $\mu_n$ (Eq.~\ref{equ:prototype}) is generated by a momentum encoder and projection head. While the online network updates via gradient descent, its weights are used to update the EMA teacher with a momentum parameter $\tau = 0.996$.

\subsection{Main Results on ImageNet Benchmarks}
\paragraph{ImageNet-1K.}
We evaluate the representations learned by a ViT-B/14 after $100$ epochs on ImageNet-1K (Table~\ref{tab:in1k_merged}). SPHERE-JEPA matches the $k$-NN performance of LeJEPA while offering a $+1.8\%$ improvement in linear probing. Adding an Exponential Moving Average (EMA) teacher further stabilizes training and boosts representations.

\paragraph{ImageNet-100.}
To validate consistency across architectures, we evaluate a ResNet-18 trained on ImageNet-100 (Table~\ref{tab:imagenet100_rn18}). SPHERE-JEPA performs competitively with LeJEPA. This confirms that explicitly enforcing a strict target distribution (whether uniform or Gaussian) yields more robust convolutional features than relying on relaxed moment matching or proxy entropy estimators.

\subsection{Downstream and Specialized Evaluations}

\paragraph{Transfer Learning.} We further assess transferability via linear probing on downstream tasks using the ImageNet-1K pretrained ViT-B/14. As shown in Table~\ref{tab:transfer_vitb_all}, SPHERE-JEPA consistently improves across most datasets, achieving a higher average accuracy ($54.1\%$) than LeJEPA ($52.4\%$).

\paragraph{Texture Retrieval.} While standard benchmarks assess linear separability, our theory suggests hyperspherical representations excel with nonparametric estimators like $k$-NN. We test this directly on a controlled texture retrieval task (details in Appendix~\ref{sec:texture_details}). As reported in Table~\ref{tab:retrieval_avg}, SPHERE-JEPA significantly outperforms LeJEPA, improving mean Average Precision by $+6.36\%$ directly in the embedding space ($0.9184$ vs. $0.8548$) and $+7.34\%$ after the projection head ($0.9292$ vs. $0.8558$).

\paragraph{Galaxy10.} To test robustness on non-natural images, we train a ResNet-50 from scratch on Galaxy10 (Table~\ref{tab:galaxy10_rn50}). This evaluates whether our hyperspherical priors generalize to scientific domains where rotation invariance is crucial. SPHERE-JEPA matches LeJEPA's strong performance, confirming the broad applicability of the uniform geometric prior.

\begin{table*}[t]
    \centering
    
    \caption{Frozen-backbone linear probing accuracy (\%) on downstream datasets using ViT-B/14 backbones pretrained for 100 epochs on ImageNet-1K. Avg denotes the mean across all datasets.}
    \label{tab:transfer_vitb_all}
    \resizebox{\linewidth}{!}{
    \begin{tabular}{lcccccccc}
    \toprule
    \textbf{Method} & \textbf{DTD} & \textbf{Aircraft} & \textbf{CIFAR10} & \textbf{CIFAR100} & \textbf{Flowers} & \textbf{Food} & \textbf{Pets} & \textbf{Avg} \\
    \midrule
    LeJEPA            & 60.5 & 09.0 & 82.0 & 52.1 & 45.3 & 56.9 & 60.8 & 52.4 \\
    SPHERE-JEPA       & 60.0 & 10.0 & 82.5 & 53.1 & 51.1 & 60.3 & 62.0 & \textbf{54.1} \\
    \midrule
    LeJEPA (EMA)      & 58.5 & 10.4 & 81.9 & 49.2 & 59.0 & 68.3 & 79.6 & 58.1 \\
    SPHERE-JEPA (EMA) & 60.4 & 10.5 & 82.2 & 51.8 & 52.8 & 68.6 & 81.8 & \textbf{58.3} \\
    \bottomrule
    \end{tabular}
    }
    
    \vspace{0.6cm} 
    
    \caption{Nearest-neighbor retrieval performance averaged across four texture datasets.}
    \label{tab:retrieval_avg}
    \begin{tabular}{lccccc}
    \toprule
    \textbf{Method} & \textbf{Recall@1} & \textbf{Recall@3} & \textbf{Recall@5} & \textbf{mAP} & \textbf{mAP (emb)} \\
    \midrule
    LeJEPA       & 78.2 & 91.6 & 95.1 & 85.6 & 85.5 \\
    SPHERE-JEPA  & \textbf{89.0} & \textbf{96.3} & \textbf{97.9} & \textbf{92.9} & \textbf{91.8} \\
    \bottomrule
    \end{tabular}
    
    \vspace{0.6cm} 
    
    \begin{minipage}[t]{0.48\textwidth}
        \centering
        \caption{ImageNet-1K (ViT-B/14, 100 ep.) evaluation.}
        \label{tab:in1k_merged}
        \begin{tabular}{lcc}
        \toprule
        \textbf{Method} & \textbf{$k$-NN (\%)} & \textbf{Linear (\%)} \\
        \midrule
        \multicolumn{3}{c}{\textit{Without EMA}} \\
        \midrule
        LeJEPA      & \textbf{50.4} & 59.2 \\
        SPHERE-JEPA & \textbf{50.4} & \textbf{61.0} \\
        \midrule
        \multicolumn{3}{c}{\textit{With EMA ($\tau=0.996$)}} \\
        \midrule
        LeJEPA      & 61.7 & 68.2 \\
        SPHERE-JEPA & \textbf{62.6} & \textbf{69.3} \\
        \bottomrule
        \end{tabular}
    \end{minipage}
    \hfill
    \begin{minipage}[t]{0.48\textwidth}
        \centering
        \caption{ImageNet-100 (ResNet-18, 200 ep.).}
        \label{tab:imagenet100_rn18}
        \begin{tabular}{lcc}
        \toprule
        \textbf{Method} & \textbf{$k$-NN (\%)} & \textbf{Linear (\%)} \\
        \midrule
        LeJEPA      & \textbf{65.8} & \textbf{70.9} \\
        SPHERE-JEPA & 64.8 & 70.8 \\
        \bottomrule
        \end{tabular}
        
        \vspace{0.5cm} 
        
        \caption{Galaxy10 (ResNet-50, 200 ep.).}
        \label{tab:galaxy10_rn50}
        \begin{tabular}{lcc}
        \toprule
        \textbf{Method} & \textbf{$k$-NN (\%)} & \textbf{Linear (\%)} \\
        \midrule
        LeJEPA      & \textbf{70.1} & \textbf{74.4} \\
        SPHERE-JEPA & 69.9 & \textbf{74.4} \\
        \bottomrule
        \end{tabular}
    \end{minipage}

\end{table*}

\section{Conclusion}
\label{conclusion}

We introduced a minimax perspective on optimal SSL representations, showing that linear ridge regression, \(k\)-NN regression, and exponential-kernel ridge regression collectively characterize the uniform distribution on the sphere as an optimal representation geometry.
Motivated by this result, we introduced SPHERE-JEPA and its projection-based regularizer, SUSReg. 
Empirically, spherical uniformity improves linear separability on ImageNet and substantially boosts nonparametric retrieval performance. 
However, our empirical validation remains limited: our large-scale evaluation relies on a single ViT-B/14 training run without hyperparameter tuning.
Validating robustness across multiple seeds, scaling to larger batch sizes and architectures, and assessing dense prediction tasks remain open challenges.

\bibliography{bib}
\bibliographystyle{plainnat} 

\clearpage
\onecolumn

\setcounter{section}{0}
\renewcommand{\thesection}{\Alph{section}}

\section{Computational Resources}
\label{sec:appendix_compute}

This appendix provides the hardware specifications and computational requirements necessary to reproduce the results presented in this work.

\paragraph{Hardware Infrastructure.}
Our experiments were conducted using the following hardware setups:
\begin{itemize}
    \item \textbf{ImageNet-1K Pretraining (ViT-B/14):} All runs (with and without EMA) were executed using 4 NVIDIA A100 (80GB) GPUs.
    \item \textbf{Other Models:} Pretraining for ResNet-18 on ImageNet-100 and ResNet-50 on Galaxy10, as well as all downstream linear probing evaluations, were performed on a single NVIDIA RTX-6000 GPU.
\end{itemize}

\paragraph{Training Duration and Efficiency.}
For the 100-epoch pretraining of a ViT-B/14 on ImageNet-1K, the execution time was approximately 2.5 days (60 hours) per run. It is important to note that \textbf{SPHERE-JEPA} does not incur any additional computational overhead compared to the LeJEPA baseline.

\paragraph{Software Environment.}
All models were implemented using PyTorch and trained using the AdamW optimizer. We will release the core implementation and training scripts to ensure full reproducibility.

\section{Existing Assets and Licenses}
\label{app:licenses}
The datasets and software libraries used in this work are publicly available and strictly used for academic research purposes, in compliance with their respective licenses. PyTorch is released under the BSD-style license. ImageNet-1K and ImageNet-100 are subject to the ImageNet terms of access for non-commercial research. The Galaxy10 DECals dataset is derived from the DESI Legacy Imaging Surveys and Galaxy Zoo, distributed under the MIT license and CC BY 4.0, respectively.

\section{Texture Retrieval Details}
\label{sec:texture_details}

\subsection{Problem Setup}

We consider a nonparametric retrieval task designed to evaluate the geometry of learned representations. Given a query image, the objective is to retrieve another view of the same texture instance among visually similar samples.

In this setting, each data point is defined as a set of transformed views generated from the same underlying texture. This formulation encourages representations of transformed views to remain close in the embedding space while preserving separation across different textures.

The resulting evaluation protocol depends on the relative distances between nearby samples in the representation space, making it particularly suitable for analyzing how representation regularization affects nearest-neighbor retrieval.

\subsection{Datasets}

We instantiate the retrieval protocol using four procedural texture datasets: \textit{Disk}, \textit{Cloud}, \textit{Flake}, and \textit{Wood} (see Figure~\ref{fig:textures}). Each dataset is generated from a distinct stochastic process, defining a family of textures with shared statistical properties.

Disk and Flake textures are derived from heavy-tailed noise with additional smoothing or structured filtering, respectively. Cloud textures are generated from blurred Brownian motion, yielding smooth and self-similar patterns. Wood textures are obtained from Perlin noise using the \texttt{noise} library\footnote{\url{https://pypi.org/project/noise/}}.

Within each dataset, samples are generated from the same underlying stochastic process using different random seeds, yielding images that share similar statistical structure while remaining visually distinct.

For each texture family, we generate $10{,}000$ samples for training, $500$ for validation, and $10{,}000$ for testing, using independent random seeds to ensure no overlap between splits.
\begin{figure}
  \centering

  \includegraphics[width=0.32\linewidth]{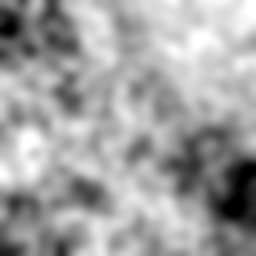}
  \includegraphics[width=0.32\linewidth]{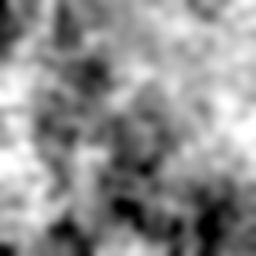}
  \includegraphics[width=0.32\linewidth]{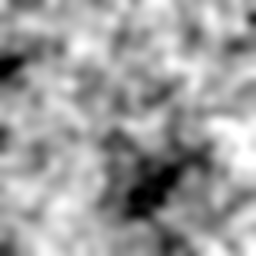}

  \medskip

  \includegraphics[width=0.32\linewidth]{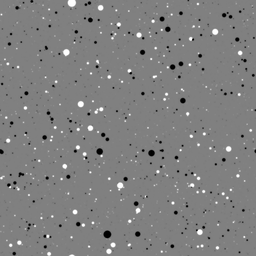}
  \includegraphics[width=0.32\linewidth]{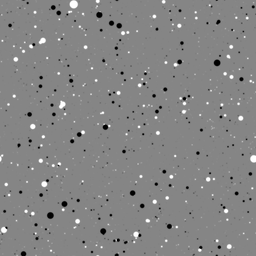}
  \includegraphics[width=0.32\linewidth]{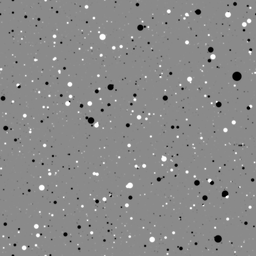}

  \medskip

  \includegraphics[width=0.32\linewidth]{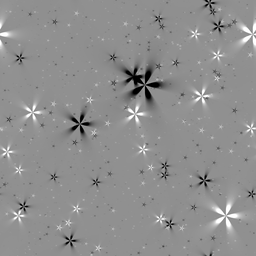}
  \includegraphics[width=0.32\linewidth]{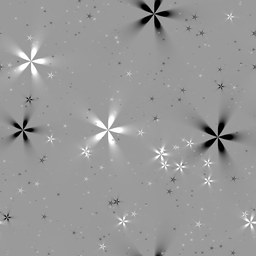}
  \includegraphics[width=0.32\linewidth]{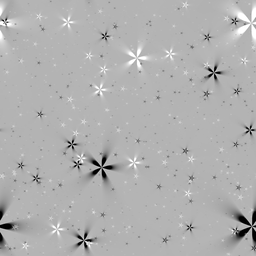}

  \medskip

  \includegraphics[width=0.32\linewidth]{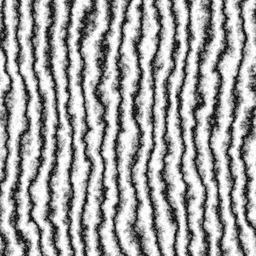}
  \includegraphics[width=0.32\linewidth]{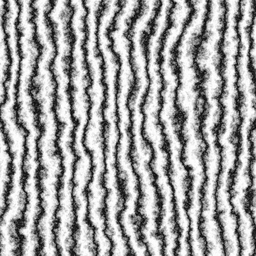}
  \includegraphics[width=0.32\linewidth]{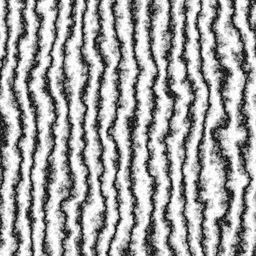}

  \caption{Examples of procedural textures used for retrieval evaluation. Each row corresponds to a texture family (top to bottom: cloud, disk, flake, wood). Images within a row are generated from the same stochastic process with different random seeds, resulting in strong statistical similarity but distinct visual realizations.}
  \label{fig:textures}
\end{figure}

\subsection{Data Augmentation and View Generation}
\label{subsec:crop_texture}

To generate multiple views from each texture image, we apply stochastic spatial and photometric augmentations. These transformations introduce significant variability while preserving the underlying texture statistics across views.

Each view is generated using a random affine transformation implemented with Kornia. The transformation parameters include a rotation angle uniformly sampled in $[0,360]$ degrees, a translation of up to $20\%$ of the image size along each spatial dimension, an isotropic scaling factor, and a random shear transformation.

The transformed image is obtained using zero padding outside the image boundaries and is resized to a fixed resolution of $224\times224$ pixels. Figure~\ref{fig:affine_and_crops} illustrates both the effective source regions induced by independently sampled transformations and the corresponding augmented views.

Each view is further modified through photometric augmentations, including brightness and contrast adjustments, additive noise, and random erasing. All images are converted to grayscale, clamped to valid intensity ranges, and normalized before being fed to the network.

Despite strong spatial and photometric variations, the resulting views preserve consistent texture statistics while exhibiting substantial local variability, as illustrated in Figure~\ref{fig:affine_and_crops}.

Unless otherwise specified, we use $V_g = 2$ views per image. The same augmentation pipeline is used at test time to ensure consistency between training and evaluation.

\begin{figure}[t]
\centering
\begin{minipage}[c]{0.48\linewidth}
    \centering
    \includegraphics[width=\linewidth]{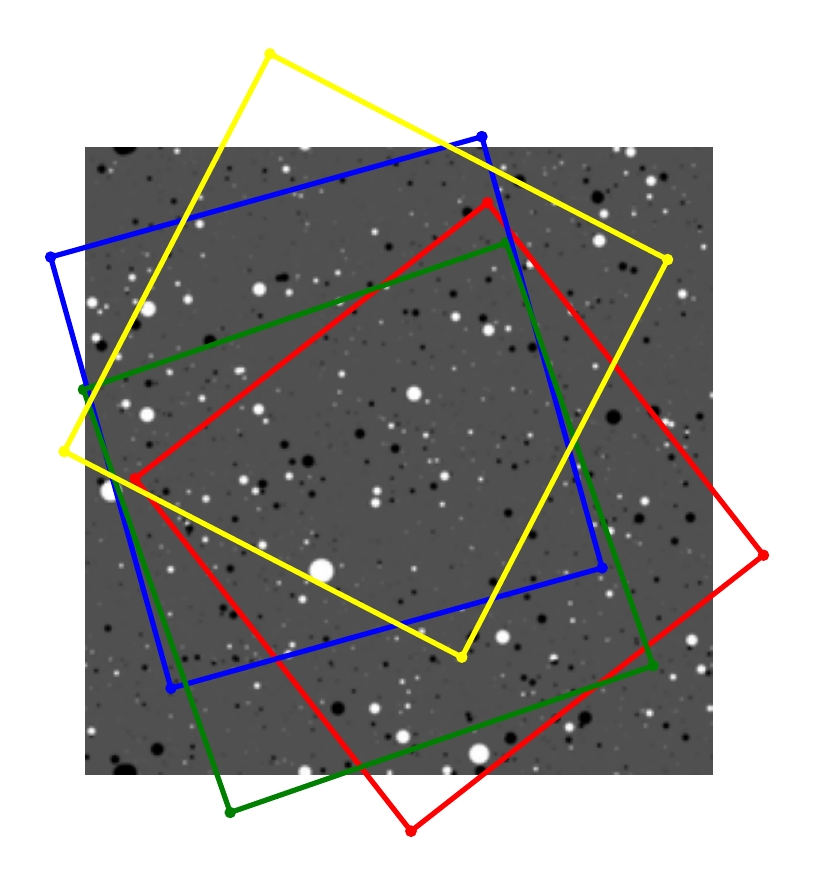}
\end{minipage}
\hfill
\begin{minipage}[c]{0.48\linewidth}
    \centering
    \includegraphics[width=0.49\linewidth]{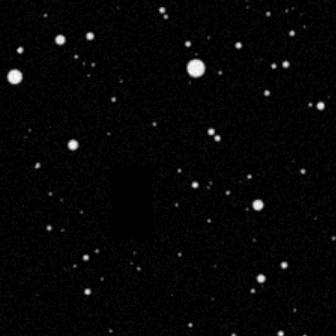}
    \includegraphics[width=0.49\linewidth]{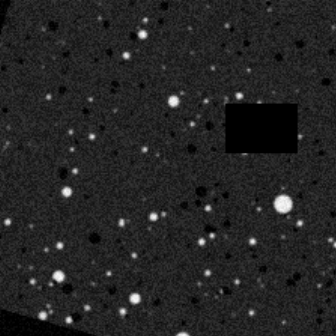}

    \vspace{2mm}

    \includegraphics[width=0.49\linewidth]{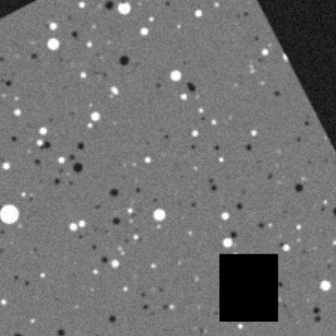}
    \includegraphics[width=0.49\linewidth]{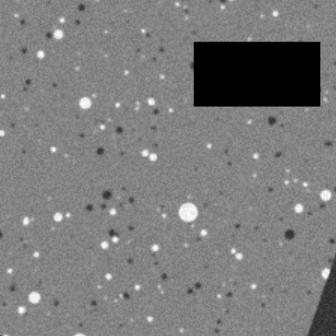}
\end{minipage}

\caption{\textbf{Left:} effective source regions induced by independently sampled random affine transformations.  
\textbf{Right:} corresponding augmented views obtained after affine and photometric transformations.}
\label{fig:affine_and_crops}
\end{figure}

\subsection{Retrieval Results}

We evaluate retrieval performance using a nearest-neighbor protocol. For each query image, the goal is to retrieve another view of the same texture instance among a set of candidate samples.

Evaluation is performed within mini-batches of size $B=100$. For each query, similarity is computed against the other samples in the batch, and retrieval performance is measured based on the ranking of these candidates.

We report nearest-neighbor retrieval performance using Recall@K and mean Average Precision (mAP), evaluated both on the projection head outputs and directly on backbone embeddings.

Tables~\ref{tab:retrieval_disk},~\ref{tab:retrieval_flake},~\ref{tab:retrieval_cloud},~\ref{tab:retrieval_wood} report results for each dataset, while Table~\ref{tab:retrieval_avg} summarizes the average performance across all four datasets.

These results highlight the importance of representation geometry in nonparametric settings, with hyperspherical regularization leading to improved performance when evaluated directly in the embedding space.

\begin{table}[h]
\centering
\caption{Retrieval performance on the Disk texture dataset.}
\label{tab:retrieval_disk}
\begin{tabular}{lccccc}
\toprule
Method & Recall@1 & Recall@3 & Recall@5 & mAP & mAP (emb) \\
\midrule
byol       & 95.7 & 98.7 & 99.1 & \textbf{97.3} & 61.2 \\
spherejepa & 89.1 & 95.9 & 97.5 & 92.8 & \textbf{90.6} \\
lejepa     & 82.3 & 92.4 & 95.2 & 87.9 & 88.0 \\
vicreg     & 67.0 & 82.1 & 86.9 & 75.9 & 75.7 \\
\bottomrule
\end{tabular}
\end{table}

\begin{table}[h]
\centering
\caption{Retrieval performance on the Flake texture dataset.}
\label{tab:retrieval_flake}
\begin{tabular}{lccccc}
\toprule
Method & Recall@1 & Recall@3 & Recall@5 & mAP & mAP (emb) \\
\midrule
byol & 88.4 & 96.0 & 97.9 & \textbf{92.5} & \textbf{89.3} \\
spherejepa & 83.0 & 94.6 & 97.2 & 89.2 & 89.0 \\
lejepa & 79.5 & 93.2 & 96.3 & 86.9 & 87.5 \\
vicreg & 69.7 & 86.6 & 92.0 & 79.3 & 80.3 \\
\bottomrule
\end{tabular}
\end{table}

\begin{table}[h]
\caption{Retrieval performance on the Cloud texture dataset.}
\label{tab:retrieval_cloud}\centering
\begin{tabular}{lccccc}
\toprule
Method & Recall@1 & Recall@3 & Recall@5 & mAP & mAP (emb) \\
\midrule
byol  & 93.6 & 97.3 & 98.2 & \textbf{95.6} & 80.4 \\
spherejepa  & 90.1 & 95.7 & 97.3 & 93.3 & \textbf{91.8} \\
lejepa & 87.4 & 94.8 & 96.7 & 91.5 & 90.6 \\
vicreg  & 81.8 & 90.5 & 93.3 & 86.9 & 87.4 \\
\bottomrule
\end{tabular}
\end{table}

\begin{table}[h]
\centering
\caption{Retrieval performance on the Wood texture dataset.}
\label{tab:retrieval_wood}
\begin{tabular}{lccccc}
\toprule
Method & Recall@1 & Recall@3 & Recall@5 & mAP & mAP (emb) \\
\midrule
byol & 98.9 & 99.8 & 99.9 & \textbf{99.4} & 93.0 \\
spherejepa & 93.8 & 99.0 & 99.7 & 96.4 & \textbf{95.9} \\
lejepa & 63.8 & 86.0 & 92.3 & 76.0 & 75.7 \\
vicreg & 16.3 & 39.7 & 56.2 & 34.6 & 35.0 \\
\bottomrule
\end{tabular}
\end{table}

\section{Worst-case integrated squared bias for kernel ridge regression}
\label{app:worstcasekrr}
We briefly recall the population and empirical kernel ridge regression
(KRR) formulations.

Let
\[
K(x,y)=\exp(\kappa x^\top y),
\qquad \kappa>0,
\]
let \(\mathcal H_K\) denote the associated RKHS
(see, e.g., \cite{scholkopf2002learning}),
and let \(p\) be a probability distribution on
\(\mathcal M\subseteq\mathbb R^d\).

For \(g_\star\in\mathcal H_K\), the empirical KRR estimator based on
\[
x_1,\dots,x_B\overset{i.i.d.}{\sim}p
\]
is defined by
\[
\widehat g_{B,\lambda}
=
\arg\min_{\hat g\in\mathcal H_K}
\frac1B\sum_{i=1}^B
(\hat g(x_i)-g_\star(x_i))^2
+
\lambda\|\hat g\|_{\mathcal H_K}^2,
\qquad \lambda>0.
\]

Equivalently, in operator form (see, e.g., \cite{scholkopf2002learning}),
\[
\widehat g_{B,\lambda}
=
(T_B+\lambda I)^{-1}T_B g_\star,
\]
where
\[
T_B g
=
\frac1B
\sum_{i=1}^B
K(x_i,\cdot)\,g(x_i)
\]
denotes the empirical covariance operator.

The corresponding population covariance operator is
\[
T_p g
=
\int_{\mathcal M}
K(x,\cdot)\,g(x)\,
p(x)\,\mathrm{dvol}(x).
\]

Assume that
\[
\int_{\mathcal M}
K(x,x)\,
p(x)\,\mathrm{dvol}(x)
<\infty.
\]
Under this condition, the empirical covariance operator
converges almost surely to the population covariance operator
as \(B\to\infty\)~\cite{de2005learning}.
Consequently, in the large-sample regime,
the integrated squared bias of empirical KRR is asymptotically governed
by its population counterpart.
Therefore, throughout the remainder of the analysis,
we study the population bias functional directly.

The population integrated squared bias associated with \(g_\star\) is
\[
\mathrm{ISB}(g_\star, p)
=
\int_{\mathcal M}
\left[
\lambda(T_p+\lambda I)^{-1}g_\star(x)
\right]^2
p(x)\,\mathrm{dvol}(x).
\]

We consider the source class
\[
\mathcal G_p
=
\left\{
g_\star=T_p f
\;:\;
\|f\|_{L^2(p)}\le1
\right\}.
\]

We define the worst-case population integrated squared bias by
\[
\mathcal E(p)
=
\sup_{g_\star\in\mathcal G_p}
\mathrm{ISB}_p(g_\star).
\]

Since \(K\) is positive definite,
\[
|K(x,y)|^2\le K(x,x)K(y,y).
\]
Hence the assumption
\[
\int_{\mathcal M}
K(x,x)\,
p(x)\,\mathrm{dvol}(x)
<\infty
\]
implies
\[
\int_{\mathcal M}\int_{\mathcal M}
|K(x,y)|^2
p(x)p(y)\,
\mathrm{dvol}(x)\mathrm{dvol}(y)
<\infty.
\]
Therefore, the population covariance operator \(T_p\) is
Hilbert--Schmidt, hence compact, on \(L^2(p)\).
Moreover, since \(K\) is symmetric and positive definite,
\(T_p\) is self-adjoint and positive.

Consequently, there exist eigenvalues
\[
\mu_1\ge\mu_2\ge\cdots\ge0,
\qquad
\mu_j\to0,
\]
and an orthonormal basis
\[
(e_j)_{j\ge1}
\]
of \(L^2(p)\) such that
\[
T_p e_j=\mu_j e_j,
\qquad j\ge1.
\]

\begin{restatable}[Spectral form of the population worst-case ISB]{lem}{spectralformpopworst}
\label{lemma:spectralformpopworst}
Let \((\mu_j,e_j)_{j\ge1}\) be the eigendecomposition of \(T_p\). Then
\[
\mathcal E(p)
=
\max_{j\ge1}
\left(
\frac{\mu_j\lambda}{\mu_j+\lambda}
\right)^2.
\]  
\end{restatable}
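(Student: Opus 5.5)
The plan is to diagonalize everything in the eigenbasis \((e_j)\) of \(T_p\) and reduce the supremum to a weighted maximum. By functional calculus on the compact self-adjoint positive operator \(T_p\), the operator \(\lambda(T_p+\lambda I)^{-1}T_p\) is diagonal in this basis with eigenvalues \(\phi(\mu_j) := \frac{\lambda\mu_j}{\mu_j+\lambda}\).

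We will regard \(T_p\), \((T_p+\lambda I)^{-1}\) and the bias map as acting on \(L^2(p)\). The ISB is exactly the squared \(L^2(p)\)-norm of the bias \(\lambda(T_p+\lambda I)^{-1}g_\star\), and the source class is parametrized by \(f\) in the unit ball of \(L^2(p)\). One should note that \(T_p f\) also lies in the RKHS; this identification of the RKHS-valued resolvent with the \(L^2(p)\) operator is the step requiring some care.

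\textbf{Step 1 (rewrite the functional).} For \(g_\star = T_p f\), write
\[
\mathrm{ISB}(g_\star,p) = \bigl\|\lambda(T_p+\lambda I)^{-1}T_p f\bigr\|_{L^2(p)}^2 .
\]
Hence
\[
\mathcal E(p) = \sup_{\|f\|_{L^2(p)}\le 1}\|A f\|^2 = \|A\|_{\mathrm{op}}^2, \qquad A := \lambda(T_p+\lambda I)^{-1}T_p .
\]

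\textbf{Step 2 (spectral expansion).} Expand \(f = \sum_j f_j e_j\) with \(\sum_j f_j^2 \le 1\), using Parseval in the orthonormal basis. Then
\[
\|Af\|^2 = \sum_j \phi(\mu_j)^2 f_j^2 \le \sup_j \phi(\mu_j)^2 .
\]
For the matching lower bound, take \(f = e_j\). This gives \(\mathcal E(p) = \sup_j \phi(\mu_j)^2\).

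\textbf{Step 3 (sup is a max).} The function \(\phi(\mu) = \lambda\mu/(\mu+\lambda)\) is increasing in \(\mu \ge 0\). Since the \(\mu_j\) are nonincreasing, the supremum is attained at \(j=1\), i.e. at \(\mu_1 = \|T_p\|\), so it is a genuine maximum and \(\mathcal E(p) = \phi(\mu_1)^2\).

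The main obstacle is the justification in Step 1 that the population bias computed in the RKHS coincides with the \(L^2(p)\) expression. I would handle it by noting that \(T_p\) restricted to the RKHS and \(T_p\) on \(L^2(p)\) share the same nonzero spectrum and eigenfunctions, so the resolvent applied to \(T_p f\) agrees pointwise \(p\)-a.e. The kernel directions of \(T_p\) contribute zero to \(T_p f\) and therefore do not affect the bound.
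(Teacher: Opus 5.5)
Your proof is correct and is essentially the paper's argument: you expand $f$ in the eigenbasis of $T_p$, note that the bias operator $\lambda(T_p+\lambda I)^{-1}T_p$ acts diagonally with coefficients $\lambda\mu_j/(\mu_j+\lambda)$, and take the supremum over the unit ball by concentrating on one eigendirection; your explicit test function $f=e_j$ and your observation that the maximum is attained at $j=1$ make precise the paper's ``concentrate all mass'' step. The RKHS-versus-$L^2(p)$ identification you flag as the main obstacle is not needed for this lemma, because the paper defines $\mathrm{ISB}(g_\star,p)$ directly as the squared $L^2(p)$ norm of $\lambda(T_p+\lambda I)^{-1}g_\star$.
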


\begin{restatable}[Uniform spherical distributions minimize the population worst-case KRR bias]{prop}{propkrr}
Assume that \(p\) is a density on \(\mathcal M\) satisfying 
\[ \int_{\mathcal M} \|x\|^2 p(x)\,\mathrm{dvol}(x)=1.\]
Then minimizing the worst-case population bias \(\mathcal E(p)\) is equivalent to minimizing the top eigenvalue \(\mu_1\) of the population covariance operator \(T_p\).

Moreover, for the exponential kernel
\[
K(x,y)=e^{\kappa x^\top y},
\]
the minimizer is
\[ p^\star = \mathrm{Unif}(\mathbb S^{d-1}).
\]
\end{restatable}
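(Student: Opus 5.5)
The plan has two parts. First we reduce the problem to the top eigenvalue using Lemma~\ref{lemma:spectralformpopworst}. Then we show that the uniform law on the sphere minimizes that eigenvalue among admissible $p$.

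\textbf{Reduction to $\mu_1$.} By Lemma~\ref{lemma:spectralformpopworst},
\[
\mathcal E(p)=\max_j \phi(\mu_j)^2, \qquad \phi(\mu)=\frac{\lambda\mu}{\mu+\lambda}.
\]
Since $\phi$ is strictly increasing on $[0,\infty)$ and the $\mu_j$ are ordered decreasingly, the maximum is attained at $j=1$. Hence $\mathcal E(p)=\phi(\mu_1)^2$, which is a strictly increasing function of $\mu_1$. Minimizing $\mathcal E$ is therefore equivalent to minimizing $\mu_1(T_p)$, which gives the first claim.

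\textbf{Lower bound on $\mu_1$.} We use the Rayleigh quotient with a test function. Since $\mu_1=\sup_{\|f\|_{L^2(p)}=1}\langle T_pf,f\rangle_{L^2(p)}$, taking $f\equiv1$ gives
\[
\mu_1 \ge \iint e^{\kappa x^\top y}\,p(x)p(y)\,\mathrm{dvol}(x)\,\mathrm{dvol}(y).
\]
Jensen's inequality applied to the exponential then yields
\[
\mu_1\ge \exp\bigl(\kappa\,\|\mathbb E_p[X]\|^2\bigr)\ge 1.
\]
The bound $1$ alone is not tight, so the plan is to expand the kernel in power series instead. We write
\[
e^{\kappa x^\top y}=\sum_{m\ge0}\frac{\kappa^m}{m!}\,\langle x^{\otimes m},y^{\otimes m}\rangle.
\]
Then $T_p=\sum_m \frac{\kappa^m}{m!}\,\Phi_m\Phi_m^*$ with $\Phi_m^*f=\mathbb E_p[f(X)X^{\otimes m}]$, and the Gram operator of the $m$-th block is similar to the $m$-th moment tensor
\[
\Sigma_m=\mathbb E_p\bigl[X^{\otimes m}(X^{\otimes m})^\top\bigr].
\]
We lower bound $\mu_1$ by choosing $f$ in the span of low-degree polynomials. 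A concrete first test is $f=1+\alpha\, v^\top x$ with $v$ the top eigenvector of $\mathbb E[XX^\top]$. The constraint $\mathbb E\|X\|^2=1$ forces $\mathrm{tr}\,\mathbb E[XX^\top]=1$, hence the top eigenvalue of $\mathbb E[XX^\top]$ is at least $1/d$, with equality iff the second moment is isotropic. The same trace argument applies at every even order $m$: $\mathrm{tr}\,\Sigma_m=\mathbb E\|X\|^{2m}\ge(\mathbb E\|X\|^2)^m=1$, with equality iff $\|X\|=1$ a.s.

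\textbf{Upper bound and main obstacle.} For $p^\star=\mathrm{Unif}(\mathbb S^{d-1})$, the Funk--Hecke theorem diagonalizes $T_{p^\star}$ in spherical harmonics. Its eigenvalues $\mu^\star_\ell$ are explicit Bessel-type integrals of $e^{\kappa t}$ against $\rho_d(t)$ times Gegenbauer polynomials, and $\mu^\star_1=\mu^\star_0=\mathbb E_{p^\star}[e^{\kappa X^\top y}]$. The goal is then to show $\mu_1(T_p)\ge\mu_0^\star$ for every admissible $p$. Rotational symmetry suggests an averaging argument: $\mu_1$ is a convex function of the kernel Gram structure, and it is invariant when $p$ is replaced by a rotated copy $p\circ R$. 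If one could show that $p\mapsto\mu_1(T_p)$ is convex, averaging over $R\sim$ Haar would give $\mu_1(T_p)\ge\mu_1(T_{\bar p})$ with $\bar p$ rotation-invariant. A rotation-invariant $p$ is a radial mixture, and the norm constraint together with convexity of $r\mapsto e^{\kappa r^2 t}$ should force $\bar p$ onto the unit sphere.

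The delicate step is this convexity in $p$. $T_p$ acts on the $p$-dependent space $L^2(p)$, so the eigenvalue is not obviously convex in $p$. We plan to handle this by passing to the isospectral operator $\Phi_p^*\Phi_p$ on the fixed RKHS $\mathcal H_K$,
\[
\Phi_p^*\Phi_p=\int K(x,\cdot)\otimes K(x,\cdot)\,p(x)\,\mathrm{dvol}(x),
\]
which is \emph{linear} in $p$. Its top eigenvalue is therefore convex in $p$ as a supremum of linear functionals, and the Haar-averaging argument goes through. The remaining piece is the radial comparison. After averaging, compare the rotation-invariant radial mixture against $p^\star$ on the constant test function, using Jensen in the radius under $\mathbb E r^2=1$.
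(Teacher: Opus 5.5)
Your route is the paper's. You reduce to $\mu_1$ by strict monotonicity of $\mu\mapsto(\lambda\mu/(\mu+\lambda))^2$. You get convexity of $p\mapsto\mu_1(T_p)$ from the covariance operator on the fixed RKHS; the paper does the same via a feature map, $\mu_1=\sup_{\|a\|=1}\mathbb E_p[\langle a,\Phi(X)\rangle^2]$. You then Haar-symmetrize and compare radial profiles through $\mathbb E[e^{\kappa X^\top Y}]$. The paper never states the two links between that quantity and $\mu_1$, and your plan supplies both: $\mu_1(T_{\bar p})\ge\langle T_{\bar p}1,1\rangle$ (constant test function), and $\mu_1(T_{p^\star})=g_d(1)=\mathbb E_{t\sim\rho_d}[e^{\kappa t}]$ (Funk--Hecke). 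For the Funk--Hecke step, say why the degree-$0$ eigenvalue is the largest: $|P_\ell|\le 1$ on $[-1,1]$ for Gegenbauer polynomials normalized by $P_\ell(1)=1$, and $e^{\kappa t}>0$. Also fix the notation. With degree indexing, $\mu^\star_1=\mathbb E_{\rho_d}[t\,e^{\kappa t}]<\mu^\star_0$, so ``$\mu^\star_1=\mu^\star_0$'' is false as written; what you mean is that the top eigenvalue equals $\mu^\star_0$.

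The step that fails as written is the radial comparison. Here $\langle T_{\bar p}1,1\rangle=\mathbb E[e^{\kappa X^\top Y}]$ with $X=RU$ and $Y=SV$ independent. The integrand is therefore $e^{\kappa RS\,t}$ with two independent radii, not $e^{\kappa r^2 t}$. Jensen applied pointwise in $t$ cannot work: for $t>0$ it only gives $e^{\kappa t(\mathbb E R)^2}\le e^{\kappa t}$. In fact $\mathbb E[e^{\kappa RS\,t}]\ge e^{\kappa t}$ is false in general: with $R\in\{0,\sqrt2\}$ equiprobable and $e^{\kappa t}=2$, the left side is $7/4$.

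The missing idea is to integrate out the angle first. Since $t=U^\top V\sim\rho_d$ is symmetric, $g_d(u)=\mathbb E[e^{\kappa u t}]=\sum_{m\ge0}c_m u^{2m}$ with $c_m=\kappa^{2m}\mathbb E[t^{2m}]/(2m)!>0$. Independence then gives $\mathbb E[g_d(RS)]=\sum_m c_m(\mathbb E R^{2m})^2\ge\sum_m c_m=g_d(1)$, because $\mathbb E R^{2m}\ge(\mathbb E R^2)^m=1$. This is exactly the paper's computation; equivalently, apply Jensen to the convex $G$ with $g_d(u)=G(u^2)$, evaluated at $R^2S^2$. Your tensor/trace exploration in the second paragraph plays no role and gives no bound matching $\mu^\star_0$, so drop it.

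Finally, the statement calls $p^\star$ \emph{the} minimizer, so you need to track equality. Since $c_2>0$, equality forces $\mathbb E R^4=(\mathbb E R^2)^2$, i.e.\ $\|X\|=1$ a.s.\ under $\bar p$, and hence under $p$. For $p$ on the sphere, $\mu_1(T_p)\ge\mathbb E_p[e^{\kappa X^\top Y}]=\sum_\ell\mu^\star_\ell\sum_j(\mathbb E_p h_{\ell j}(X))^2\ge\mu^\star_0$, where $(h_{\ell j})_j$ is an orthonormal basis of degree-$\ell$ harmonics. Every $\mu^\star_\ell$ is positive, so equality forces $p=p^\star$.
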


\section{Proofs: Kernel Ridge Regression}

\spectralformpopworst*
\begin{proof}

Expand
\[
f
=
\sum_{j\ge1} a_j e_j,
\qquad
\sum_{j\ge1}a_j^2\le1.
\]

Since
\(
g_\star=T_p f,
\)
we have
\[
g_\star
=
\sum_{j\ge1}
\mu_j a_j e_j.
\]

Moreover,
\[
\lambda(T_p+\lambda I)^{-1}g_\star
=
\sum_{j\ge1}
\frac{\lambda\mu_j}{\mu_j+\lambda}
a_j e_j.
\]

Therefore,
\[
\left\|
\lambda(T_p+\lambda I)^{-1}g_\star
\right\|_{L^2(p)}^2
=
\sum_{j\ge1}
\left(
\frac{\lambda\mu_j}{\mu_j+\lambda}
\right)^2
a_j^2.
\]

Taking the supremum over
\(
\sum_{j\ge1}a_j^2\le1,
\)
the optimum is attained by concentrating all mass on the eigendirection maximizing
\(
\left(
\frac{\lambda\mu_j}{\mu_j+\lambda}
\right)^2.
\)

Hence
\[
\mathcal E(p)
=
\max_{j\ge1}
\left(
\frac{\lambda\mu_j}{\mu_j+\lambda}
\right)^2.
\]

\end{proof}

\propkrr*
\begin{proof}

Define
\[
h_\lambda(\mu)
=
\left(
\frac{\mu\lambda}{\mu+\lambda}
\right)^2.
\]

Since \(h_\lambda\) increases strictly on \([0,+\infty)\), Lemma~\ref{lemma:spectralformpopworst} implies
\[
\mathcal E(p)
=
h_\lambda(\mu_1).
\]
Therefore minimizing \(\mathcal E(p)\) is equivalent to minimizing $\mu_1$.

Using the expansion
\[
e^{\kappa x^\top y}
=
\sum_{m=0}^\infty
\frac{ \kappa \langle x,y\rangle^m}{m!},
\]
the kernel admits a feature representation
\[
K(x,y)
=
\langle\Phi(x),\Phi(y)\rangle.
\]

By the spectral characterization of compact positive self-adjoint operators
(see, e.g., \cite{conway2019course}),
\[
\mu_1
=
\sup_{\|a\|=1}
\langle T_p a,a\rangle.
\]

Using the feature representation
\[
K(x,y)=\langle\Phi(x),\Phi(y)\rangle,
\]
Rewriting the integral that defines $T_p$ as an expectation and thanks to the feature representation, the covariance operator can be written as: 
\[
T_p a
=
\mathbb E_{X\sim p}
\big[
\langle a,\Phi(X)\rangle\,\Phi(X)
\big].
\]
Therefore,
\[
\langle T_p a,a\rangle
=
\mathbb E_{X\sim p}
\big[
\langle a,\Phi(X)\rangle^2
\big].
\]
and hence
\begin{equation} \label{equ:mu1}
\mu_1
=
\sup_{\|a\|=1}
\mathbb E_{X\sim p}
\big[
\langle a,\Phi(X)\rangle^2
\big].
\end{equation}

Thus \(\mu_1\) corresponds to the largest variance direction in feature space.

Next, the kernel is rotationally invariant:
\[
K(Qx,Qy)=K(x,y),
\qquad
Q\in O(d).
\]

Let \(dQ\) denote the normalized Haar probability measure on the orthogonal
group \(O(d)\).
Given any admissible distribution \(p\), define its rotational symmetrization by
\[
\bar p(A)
=
\int_{O(d)}
Q^* p(A)\,
dQ,
\]
with $Q^*$ the pullback of $p$ by $Q$, for every measurable set \(A\subseteq\mathcal M\).

By construction, \(\bar p\) is rotationally invariant and remains a
probability distribution.
Since orthogonal transformations preserve Euclidean norms,
\[
\int_{\mathcal M}
\|x\|^2
\bar p(x)\,\mathrm{dvol}(x)
=
1.
\]

Moreover, by \eqref{equ:mu1} and the fact that the supremum of linear functionals is convex,
\(p\mapsto \mu_1(T_p)\) is convex. Therefore,
\[
\mu_1(T_{\bar p})
\le
\int_{O(d)}
\mu_1(T_{Q^*p})
\,dQ,
\]

Since the kernel is rotationally invariant,
\[
\mu_1(T_{Q^*p})
=
\mu_1(T_p),
\]
and hence
\[
\mu_1(T_{\bar p})
\le
\mu_1(T_p).
\]

Therefore, without loss of generality, we may restrict attention to
rotationally invariant distributions \(p\).

Let \(X\sim p\). Since \(p\) is rotationally invariant, a standard
spherical decomposition (see, e.g., \cite{vershynin2018high}) yields
\(
X=RU,
\)
where
\(
U\sim \mathrm{Unif}(\mathbb S^{d-1}),
\) with
\(R\ge0,
\)
and \(R\) independent of \(U\), and satisfying
\(
\mathbb E[R^2]=1.
\)

Let
\(
Y=SV
\)
be an independent copy of \(X\), where
\(
V\sim \mathrm{Unif}(\mathbb S^{d-1})
\)
is independent of \(S\).

Then
\[
X^\top Y
=
RS\,U^\top V,
\]
and
\[
\mathbb E[e^{\kappa X^\top Y}]
=
\mathbb E[g_d(RS)],
\]
where
\[
g_d(t)
=
\mathbb E_{U,V}
[e^{t \kappa U^\top V}].
\]

Since
\[
e^{t \kappa U^\top V}
=
\sum_{m=0}^\infty
\frac{t^m \kappa^m(U^\top V)^m}{m!},
\]
symmetry implies
\[
g_d(t)
=
\sum_{m=0}^\infty
c_m t^{2m},
\qquad
c_m\ge0.
\]

Therefore,
\[
\mathbb E[g_d(RS)]
=
\sum_{m=0}^\infty
c_m
(\mathbb E[R^{2m}])^2.
\]

By Jensen's inequality,
\[
\mathbb E[R^{2m}]
\ge
(\mathbb E[R^2])^m
=
1.
\]

Hence
\[
\mathbb E[g_d(RS)]
\ge
\sum_{m=0}^\infty c_m
=
g_d(1).
\]

Equality holds only if
\[
\mathbb E[R^{2m}]=1,
\qquad m\ge1.
\]
By Lemma~\ref{lemma:Rconstant}, this implies
\[
R=1
\qquad\text{a.s.}
\]

Since we already reduced the problem to rotationally invariant
distributions, any admissible optimizer must satisfy
\[
X=RU,
\qquad
U\sim\mathrm{Unif}(\mathbb S^{d-1}).
\]

Hence the condition \(R=1\) almost surely implies
\[
X\sim\mathrm{Unif}(\mathbb S^{d-1}).
\]

Therefore,
\[
p^\star
=
\mathrm{Unif}(\mathbb S^{d-1})
\]
minimizes \(\mu_1\), and consequently minimizes $\mathcal E(p)$.
\end{proof}

\begin{lemma} \label{lemma:Rconstant}
Let \(R\ge0\) be a random variable such that
\[
\mathbb E[R^{2m}]=1,
\qquad m\ge1.
\]
Then
\[
R=1
\qquad\text{almost surely}.
\]
\end{lemma}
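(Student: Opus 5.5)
The cleanest route is to use only the cases $m=1$ and $m=2$ of the hypothesis, through a variance computation on $R^2$. Set $W\coloneqq R^2\ge 0$. The hypothesis with $m=1$ gives $\mathbb E[W]=1$, and with $m=2$ gives $\mathbb E[W^2]=1$. Then
\[
\operatorname{Var}(W)=\mathbb E[W^2]-(\mathbb E[W])^2=1-1=0 .
\]
Hence $W$ is almost surely equal to its mean, so $R^2=1$ almost surely. Since $R\ge 0$, this gives $R=1$ almost surely.

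The only step needing care is checking that the moments are finite, so that the variance identity is legitimate. This is immediate: the hypothesis states that $\mathbb E[R^4]=1<\infty$. Therefore $W\in L^2$, and $\operatorname{Var}(W)=\mathbb E[(W-1)^2]=0$ forces $W=1$ almost surely.

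An alternative argument avoids variances and uses the equality case of Jensen's inequality, $\mathbb E[W^2]\ge(\mathbb E[W])^2$ with equality iff $W$ is a.s.\ constant. I would present the variance form because it is elementary. There is no real obstacle here: the full sequence of moment conditions is more than needed, and the lemma already follows from $m=1,2$.

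In the application within the proof of the preceding proposition, equality $\mathbb E[g_d(RS)]=g_d(1)$ only yields $\mathbb E[R^{2m}]=1$ for those $m$ with $c_m>0$. For the exponential kernel the coefficients $c_m$ are strictly positive for every $m$, in particular for $m=1,2$, so the lemma applies as stated.
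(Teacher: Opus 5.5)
Your proof is correct and complete. With $W=R^2$, the cases $m=1,2$ give $\mathbb E[W]=\mathbb E[W^2]=1$, so $\mathbb E[(W-1)^2]=0$. Hence $W=1$ almost surely, and $R=1$ almost surely because $R\ge 0$.

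The paper gives no argument to compare against; it only says the proof ``follows from standard arguments and is omitted.'' The other natural standard route is moment determinacy: every moment of $R^2$ equals the corresponding moment of the point mass $\delta_1$, and a compactly supported law is determined by its moments. That route uses the full sequence of hypotheses plus a uniqueness theorem. Your variance computation is elementary and shows that two moment conditions already suffice.

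Your side remark about the application is also right. Equality in $\sum_m c_m(\mathbb E[R^{2m}])^2=\sum_m c_m$ only pins down $\mathbb E[R^{2m}]$ for those $m$ with $c_m>0$, and for the exponential kernel every $c_m$ is positive. The normalization already supplies $\mathbb E[R^2]=1$. So your argument, or strict Jensen for $x\mapsto x^m$ applied to $W$, shows that $c_m>0$ for a single $m\ge 2$ is enough for this step.
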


\begin{proof}
The proof follows from standard arguments and is omitted.
\end{proof}
\begin{remark}
The normalization
\[
\int_{\mathcal M}\|x\|^2 p(x)\,\mathrm{dvol}(x)=1
\]
rules out the trivial collapse \(p=\delta_0\), for which \(K(x,y)\equiv\phi(0)\). The argument also extends to kernels of the form \(K(x,y)=\phi(x^\top y)\), where
\[
\phi(t)=\sum_{m\ge0} a_m t^m
\]
is analytic. In fact, it suffices that \(a_m>0\) for arbitrarily large \(m\). 
\end{remark}

\section{Uniform Distributions Minimize \texorpdfstring{$k$}{k}-NN Regression Bias Modified Fisher Information Term}
\label{proof:knn}
The work of Balestriero and LeCun \citeyear{balestriero2025lejepaprovablescalableselfsupervised} 
 makes a theoretical case for isotropic Gaussian embeddings by demonstrating their optimal performance in both linear probing (ordinary least-squares regression) and nonlinear probing ($k$-NN and Nadaraya-Watson kernel regression). Specifically, they show that isotropic embeddings (not necessarily Gaussian) minimize the Tikhonov-regularized loss of ridge regression models fit to an arbitrary target on the embeddings using the linear kernel. Moreover, they show that isotropic Gaussian embeddings minimize, for a fixed number of training examples, the integrated square bias term of the bias-variance decomposition of the mean-squared error loss of $k$-NN regression and Nadaraya-Watson kernel regression under certain isotropy constraints (the prior introduced on p. 29 of that work).  However, their proofs involve assumptions that rule out common choices of latent representations, such as spherical embeddings. For example, they linearize the integrated square bias term with a Euclidean Taylor expansion that imposes smoothness constraints on the embedding distribution and regression target function over $\mathbb{R}^d$ that fail for functions defined on the sphere (ibid., Lemmas 4 and 5). Moreover, their optimality results (ibid., Theorems 1 and 7-9) depend on the fact that the leading term of this expansion, the Euclidean Fisher information integral
\begin{equation}
\label{eq:fisher}
\int_{\mathbb{R}^d} ||\nabla \log p(x)||_2^2\, p(x)\, \mathrm{d}x,
\end{equation}
is minimized (for fixed covariance matrix) by the Gaussian distribution (ibid., Lemma 6)–and, with additional scalar constraints on the distribution (ibid., Theorem 9), the isotropic Gaussian. But this integral is not defined when $p$ is, for instance, a uniform probability measure on the sphere $\mathbb{S}^{d-1}$ in $\mathbb{R}^d$. The uniform measure on the sphere is singular with respect to the Lebesgue measure on the ambient Euclidean space $\mathbb{R}^d$ and does not have a probability density function with respect to the Lebesgue measure on $\mathbb{R}^d$. Moreover, the leading term of the bias of $k$-NN regression is not proportional to \eqref{eq:fisher} when one accounts for the spatially varying $k$-NN radius.

The integrated squared bias of $k$-NN regression and kernel density estimation models on more general Riemannian manifolds can be performed using an expansion in covariant Taylor series, which does not impose these smoothness requirements in the ambient space. Indeed, this approach was used to estimate kernel density estimation and $k$-NN density estimation–where the regression target is the embedding distribution itself–in \citep{pelletier2005kernel} and \citep{henry2011knearestneighbordensityestimation}, respectively.

This more general setting tells a different story.

Let $\mathcal{M} \subseteq \mathbb{R}^d$ be an $m$-dimensional Riemannian manifold with positive injectivity radius $0<r_{\min} \leq \text{inj}_\mathcal{M}(x)$ at all points $x\in\mathcal{M}$ and volume form $\mathrm{dvol}(x)$. Let $p$ be a $\mathcal{C}^3$-smooth probability density on $\mathcal{M}$. For each point $x\in\mathcal{M}$, let $B_r(x) \subseteq \mathcal{M}$ be the geodesic ball of radius $r$. 

Given a target function $f\in \mathcal{C}^3$, define the population $k$-NN regression estimate $\widehat{f}_r$ of the target $f$ as follows:
$$\widehat{f}_r(x) = \mathbb{E}_{y\sim p}\bigl[f(y) \mid y \in B_r(x)\bigr] = \frac{\int_{B_r(x)} f(y) p(y) \,\mathrm{dvol}(y)}{\int_{B_r(x)} p(y) \,\mathrm{dvol}(y)},$$
where the radius $r = r_k(x)$, of order $\bigl(\beta/p(x)\bigr)^{1/m}$, governs the radius of a geodesic ball needed to find the $k$th nearest neighbor among $n$ sample points $\{x_i\}_{i=1}^n\subseteq \mathcal{M}$, for fixed ratio $\beta=k/n.$ The pointwise bias of our estimate is simply
$$\mathrm{bias}(x; r_k(x)) = \mathbb{E}_{y\sim p}\bigl[f(y) \mid y \in B_r(x)\bigr] - f(x) = \widehat{f}_r(x) - f(x);$$ the integrated squared bias, 
\begin{equation}
\label{eq:isb}
\text{ISB}(f,p) = \int_\mathcal{M} \mathrm{bias}(x; r_k(x))^2 \, p(x)\,\mathrm{dvol}(x).
\end{equation}

\begin{restatable}[Riemannian analog of \cite{balestriero2025lejepaprovablescalableselfsupervised}, Lemma 4]{lem}{lemmafour}
\label{lemma4}
We can write the leading term of the bias as follows:
$$\textnormal{bias}(x; r_k(x)) = \frac{r_k(x)^2}{2(m+2)}\left(\Delta_\mathcal{M} f(x) + 2\frac{\langle \nabla f(x), \nabla p(x)\rangle_g}{p(x)}\right).$$
When $f$ and $g$ are $\mathcal{C}^3$ smooth, this is accurate to $O(r_k(x)^3).$
\end{restatable}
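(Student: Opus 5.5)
We will work in geodesic normal coordinates centred at $x$. Write $y=\exp_x(v)$ with $v\in T_x\mathcal M$ and $\|v\|_g<r$, where $r=r_k(x)<r_{\min}$. The exponential map is then a diffeomorphism of the tangent ball onto $B_r(x)$, and the volume form has the standard expansion
\[
\mathrm{dvol}(y)=\Bigl(1-\tfrac16\mathrm{Ric}_x(v,v)+O(\|v\|^3)\Bigr)\,dv .
\]
Both $f\circ\exp_x$ and $p\circ\exp_x$ are $\mathcal C^3$, so they admit covariant Taylor expansions:
\[
f(\exp_x v)=f(x)+\langle\nabla f(x),v\rangle_g+\tfrac12\nabla^2 f(x)(v,v)+O(\|v\|^3),
\]
and similarly for $p$. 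The plan is to substitute these expansions into the numerator and the denominator of $\widehat f_r(x)$, integrate term by term over the Euclidean ball $\{\|v\|<r\}\subseteq T_x\mathcal M\cong\mathbb R^m$, and expand the resulting ratio.

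The integrals rest on the symmetry moments of the ball. Odd moments vanish, $\int_{\|v\|<r}dv=\omega_m r^m$, and
\[
\int_{\|v\|<r}v_iv_j\,dv=\frac{\omega_m r^{m+2}}{m+2}\,\delta_{ij}.
\]
With these, the denominator becomes
\[
D=\omega_m r^m\Bigl(p(x)+O(r^2)\Bigr),
\]
where the $O(r^2)$ collects $\tfrac{r^2}{2(m+2)}\Delta p(x)$ and the Ricci correction $-\tfrac{r^2}{6(m+2)}\mathrm{Scal}(x)p(x)$. The numerator $N$ is the integral of
\[
f p=f(x)p(x)+\bigl[f(x)\langle\nabla p,v\rangle+p(x)\langle\nabla f,v\rangle\bigr]+\tfrac12\bigl[f\nabla^2p+p\nabla^2 f+2\,\nabla f\otimes\nabla p\bigr](v,v)+O(\|v\|^3)
\]
against the same volume form. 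Forming $N/D-f(x)$, every term proportional to $f(x)$ cancels exactly. This includes the Ricci/scalar-curvature corrections, because they multiply the numerator and the denominator by the same factor $1-\tfrac{r^2}{6(m+2)}\mathrm{Scal}$ at leading order. The surviving $r^2$ terms are
\[
\frac{\omega_m r^{m+2}}{m+2}\Bigl(\tfrac12 p(x)\Delta_{\mathcal M}f(x)+\langle\nabla f,\nabla p\rangle_g\Bigr).
\]
Dividing by $\omega_m r^m p(x)$ gives exactly
\[
\frac{r^2}{2(m+2)}\Bigl(\Delta_{\mathcal M}f+2\frac{\langle\nabla f,\nabla p\rangle_g}{p}\Bigr),
\]
since the trace of the Hessian is the Laplace--Beltrami operator.

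The main obstacle is the error bookkeeping needed to certify the $O(r^3)$ remainder. The cubic Taylor remainders integrate to $O(r^{m+3})$. Cross terms between the Ricci correction and the linear terms are odd in $v$ at order $r^{m+2}$, and otherwise contribute only $O(r^{m+3})$. The denominator correction $1/(1+O(r^2))$ multiplies a numerator whose $f(x)$-free part is already $O(r^{m+2})$. To make the constants uniform in $x$, the argument needs $p$ bounded below on the region considered, so that $r_k(x)\sim(\beta/p(x))^{1/m}$ stays below $r_{\min}$, together with bounds on the third derivatives of $f$ and $p$ and on the curvature. Under those assumptions the remainder is $O(r_k(x)^3)$ as claimed; without them the bound holds only pointwise.
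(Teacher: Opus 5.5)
Your proposal is correct and follows essentially the same route as the paper. You use normal coordinates with $\sqrt{\det g}=1-\tfrac16\mathrm{Ric}_{ij}y^iy^j+O(\|y\|^3)$, integrate second-order Taylor expansions of $p$ and $fp$ against the ball moments, and cancel the $f(x)\Delta_{\mathcal M}p(x)$ and scalar-curvature terms in the quotient. You organize that last step as $(N-f(x)D)/D$, whereas the paper linearizes $\frac{A+\epsilon B}{C+\epsilon D}\approx\frac{A}{C}+\epsilon\frac{BC-AD}{C^2}$, which is the same computation; your explicit remainder bookkeeping and remark on uniformity in $x$ are more careful than the paper, which tracks the $O(r_k(x)^3)$ error only informally.
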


We shall refer to the term $\Delta_{\mathcal{M}}f(x)$ as the wiggliness bias and $\langle \nabla f(x), \nabla p(x)\rangle_g/p(x)$ as the design bias. 

\begin{restatable}[Riemannian analog of \cite{balestriero2025lejepaprovablescalableselfsupervised}, Theorem 7]{prop}{theoremseven}
\label{theorem7}
The leading term of the integrated squared bias of $k$-NN regression \eqref{eq:isb} is
\begin{equation}
\label{isb_expanded}
    \mathrm{ISB}(f,p;k,n) = A(m,k,n) \int_{\mathcal{M}} \left( \Delta_{\mathcal{M}} f(x) + 2\frac{\langle \nabla f(x), \nabla p(x) \rangle_g}{p(x)} \right)^2 p(x)^{1 - 4/m} \, \mathrm{dvol}(x),
\end{equation}
where $A(m,k,n) \coloneq \frac{1}{4(m+2)^2} \left( \frac{k \Gamma(1 + m/2)}{n \pi^{m/2}} \right)^{4/m}.$
\end{restatable}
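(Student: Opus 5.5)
The plan is to substitute the pointwise expansion of Lemma~\ref{lemma4} into the definition \eqref{eq:isb} of the ISB and then make the $k$-NN radius $r_k(x)$ explicit.

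\textbf{Step 1: fix the radius.} I define $r_k(x)$ through the population mass condition
\[
\int_{B_{r}(x)} p\,\mathrm{dvol}=\beta=k/n .
\]
For small $r$, normal coordinates at $x$ give
\[
\mathrm{vol}(B_r(x))=\frac{\pi^{m/2}}{\Gamma(1+m/2)}\,r^m\bigl(1+O(r^2)\bigr),
\]
where the correction involves the scalar curvature. Also $p(y)=p(x)+O(r)$ on $B_r(x)$. Together these give
\[
p(x)\,\frac{\pi^{m/2}}{\Gamma(1+m/2)}\,r_k(x)^m=\frac{k}{n}\bigl(1+O(r_k)\bigr),
\]
and hence
\[
r_k(x)^2=\left(\frac{k\,\Gamma(1+m/2)}{n\,\pi^{m/2}}\right)^{2/m}p(x)^{-2/m}\bigl(1+O(r_k)\bigr).
\]
The positive injectivity radius $r_{\min}$ ensures that $B_r(x)$ is a genuine normal ball for all $r<r_{\min}$. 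We work in the regime $\beta\to 0$, so $r_k(x)<r_{\min}$ uniformly. The lower bound on $p$ needed for this comes from $p$ being a continuous positive density; on a compact $\mathcal M$, or on the support, it is bounded below.

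\textbf{Step 2: square and integrate.} By Lemma~\ref{lemma4},
\[
\mathrm{bias}^2=\frac{r_k^4}{4(m+2)^2}\,\Bigl(\Delta_{\mathcal M}f+2\frac{\langle\nabla f,\nabla p\rangle_g}{p}\Bigr)^2+O(r_k^5).
\]
Inserting $r_k^4=\bigl(k\Gamma(1+m/2)/(n\pi^{m/2})\bigr)^{4/m}p^{-4/m}$ and multiplying by the weight $p(x)$ in \eqref{eq:isb} produces exactly the constant $A(m,k,n)$ and the weight $p^{1-4/m}$ in \eqref{isb_expanded}.

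\textbf{Step 3: control the remainder (main obstacle).} The difficulty is showing that the neglected terms are of strictly higher order uniformly in $x$, so that they integrate to $o(\beta^{4/m})$. The pointwise remainder from Lemma~\ref{lemma4} is $O(r_k(x)^3)$ with a constant depending on $\mathcal C^3$ norms of $f$ and $p$ and on curvature bounds. The radius correction contributes a relative factor $1+O(r_k)$.

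To make these bounds uniform I would assume:
\begin{itemize}
\item $\|f\|_{\mathcal C^3}$ and $\|p\|_{\mathcal C^3}$ are bounded;
\item $p$ is bounded below by some $p_{\min}>0$, so that $r_k\le C\beta^{1/m}$ uniformly;
\item sectional curvature is bounded.
\end{itemize}
Under these assumptions the cross term between the leading bias and the remainder is $O(r_k^5)=O(\beta^{5/m})$, uniformly in $x$. Integrating against the probability measure $p\,\mathrm{dvol}$ then gives $O(\beta^{5/m})$, which is of lower order than the leading $\beta^{4/m}$ term.

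If $\mathcal M$ is non-compact, or $p$ is not bounded away from zero, I would restrict the statement to the region $\{p\ge p_{\min}\}$ and treat the rest as a tail term. That tail is negligible for sufficiently small $\beta$.
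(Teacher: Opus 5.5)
Your argument is correct and is essentially the paper's proof. You fix $r_k(x)\approx\bigl(k\,\Gamma(1+m/2)/(n\,\pi^{m/2}p(x))\bigr)^{1/m}$ from the population mass condition, substitute $r_k^4$ into the squared pointwise bias of Lemma~\ref{lemma4}, and integrate against $p\,\mathrm{dvol}$ to get $A(m,k,n)$ and the weight $p^{1-4/m}$. The paper additionally computes the $O(r_0^2)$ relative correction $\epsilon$ to the radius, only to discard it; you settle for an $O(r_k)$ relative error and state the uniformity assumptions ($p\ge p_{\min}>0$, bounded $\mathcal C^3$ norms and curvature) that the paper leaves implicit.

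One caveat concerns your closing aside. For a fixed $p_{\min}$, the excluded region $\{p<p_{\min}\}$ contributes at the same order $\beta^{4/m}$, since its $k$-NN radii are larger, not smaller. So that region is not negligible as $\beta\to0$ unless $p_{\min}\to0$ and $\bigl(\Delta_{\mathcal M}f+2\langle\nabla f,\nabla p\rangle_g/p\bigr)^2p^{1-4/m}$ is integrable, which can fail. The paper does not treat that case either.
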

\begin{remark}
The term $p(x)^{-4/m}$ does not appear in the result of \cite{balestriero2025lejepaprovablescalableselfsupervised}; it arises due to the spatially varying bandwidth of a $k$-NN regression.
\end{remark}
\begin{remark}
For any fixed nonuniform sampling density $p$, an adversary who chooses the target function $f$ after $p$ can make the integrated squared bias \eqref{isb_expanded} arbitrarily large by concentrating wiggliness $\Delta_{\mathcal{M}}f$ in regions where $p$ is small or rapidly varying.
\end{remark}
Using a minimax approach \citep{micchelli1979design, bach2017equivalence}, we find that the optimal design for minimizing $k$-NN regression is a uniform distribution.

\begin{restatable}[Minimax optimality of the uniform design for \(k\)-NN]{prop}{propknn}
\label{prop:knn_uniform_minimax}
Let \(\mathcal M\) be a smooth Riemannian manifold, and let
\(k,n\in\mathbb N\) satisfy \(1\le k<n\). For \(c>0\), define
\[
\mathcal G_c
=
\left\{
f\in \mathcal C^3(\mathcal M)
:
\|\Delta_{\mathcal M} f\|_\infty \le c
\right\}.
\]
Among smooth probability densities \(p\) on \(\mathcal M\), the minimax
of the ISB of \(k\)-NN regression satisfies
\[
\arg\min_p
\sup_{f\in\mathcal G_c}
\mathrm{ISB}(f,p;k,n)
=
\mathrm{Unif}(\mathcal M),
\]
where
\[
\mathrm{Unif}(\mathcal M)
=
\frac{1}{\mathrm{vol}(\mathcal M)}\,\mathrm{dvol}.
\]
\end{restatable}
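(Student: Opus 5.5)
We propose to work directly with the leading-order expression for the ISB given by Proposition~\ref{theorem7}, and to treat the problem as a two-player game in which the adversary chooses $f\in\mathcal G_c$ after $p$ is fixed. We assume $\mathcal M$ is compact, so that $\mathrm{vol}(\mathcal M)<\infty$ and $\mathrm{Unif}(\mathcal M)$ is a probability density; this is implicit in the statement. The plan has two halves: an upper bound on the worst case at the uniform design, and a lower bound, at least as large, for every other admissible density.

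\emph{Upper bound at the uniform design.} For $p_0=1/\mathrm{vol}(\mathcal M)$ we have $\nabla p_0=0$, so the design-bias term vanishes. The integrand in \eqref{isb_expanded} then reduces to $(\Delta_\mathcal M f)^2 p_0^{1-4/m}$, and
\[
\sup_{f\in\mathcal G_c}\mathrm{ISB}(f,p_0)\le A(m,k,n)\,c^2\,\mathrm{vol}(\mathcal M)^{4/m}.
\]
This bound is approached by taking $\Delta_\mathcal M f$ close to $\pm c$ on most of $\mathcal M$. Since $\int\Delta_\mathcal M f=0$ on a closed manifold, $f$ cannot satisfy $|\Delta_\mathcal M f|=c$ everywhere. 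The usual device is to let $\Delta_\mathcal M f$ oscillate rapidly between $+c$ and $-c$; because the integrand involves $(\Delta_\mathcal M f)^2$, the sign does not matter.

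\emph{Lower bound for a general density.} Fix a smooth density $p\neq p_0$ and build $f$ adapted to it. The argument has two ingredients.
\begin{itemize}
\item \textbf{Wiggliness with vanishing gradient.} We take $f=\varepsilon\,\psi(x/\varepsilon)$ in local charts, where $\psi$ is a periodic profile with $|\Delta\psi|\approx c$. Then $\Delta_\mathcal M f=O(1)$ while $\nabla f=O(\varepsilon)$. Letting $\varepsilon\to 0$ kills the cross term, and the ISB tends to
\[
A\,c^2\int_\mathcal M p^{1-4/m}\,\mathrm{dvol}.
\]
This limit interchange is valid only at leading order in $r_k$, consistent with the paper's framework. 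We should note that the expansion of Lemma~\ref{lemma4} requires the scale of $f$ to stay larger than $r_k$, so the limit must be read within that order of approximation.
\item \textbf{Comparing integrals.} For $m>4$ the exponent $1-4/m$ lies in $(0,1)$. By Jensen's inequality (concavity),
\[
\int p^{1-4/m}\,\mathrm{dvol}\le \mathrm{vol}(\mathcal M)^{4/m},
\]
with equality only when $p$ is uniform. This inequality goes the wrong way for the lower bound.
\end{itemize}

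This sign problem is where I expect the main obstacle to lie: the pure-wiggliness adversary alone does not separate $p$ from $p_0$. I would therefore also exploit the design-bias term. Since $p\neq p_0$, we have $\nabla p\neq 0$ on an open set $U$. On $U$ we choose $f$ with $\langle\nabla f,\nabla p\rangle_g$ aligned in sign with $\Delta_\mathcal M f$ and of large magnitude. This is possible because $\mathcal G_c$ constrains only $\Delta_\mathcal M f$, not $\nabla f$: we can add a large harmonic-like component, or one whose Laplacian stays bounded, for instance a function nearly linear along $\nabla p$ in a small chart. The term $2\langle\nabla f,\nabla p\rangle_g/p$ can then be made arbitrarily large while $\|\Delta_\mathcal M f\|_\infty\le c$ still holds. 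This is exactly the content of the remark following Proposition~\ref{theorem7}.

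Completing the argument requires three steps:
\begin{itemize}
\item use a cutoff to localize the large-gradient component to $U$, and control the Laplacian of the cutoff by scaling it over a region whose size is fixed;
\item show that $\sup_{f\in\mathcal G_c}\mathrm{ISB}(f,p)=+\infty$ (within the leading-order model) for every nonuniform $p$;
\item conclude that the uniform density is the unique design with a finite worst case, hence the argmin.
\end{itemize}
The delicate point is the first step: building $f$ with $|\nabla f|$ unbounded on $U$ but $|\Delta_\mathcal M f|\le c$ on all of $\mathcal M$. I would first try $f=L\,\chi\,h$, where $h$ is a local coordinate function with small Laplacian and $\chi$ is a cutoff. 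Growing $L$ inflates the Laplacian, which shows that the class may need to be read as bounding only the wiggliness, or the construction must use harmonic functions on $U$ patched by a correction that solves a Poisson equation with controlled right-hand side.
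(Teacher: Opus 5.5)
\textbf{Verdict: genuine gap.} The step you flag as delicate cannot be completed, and the paper's proof does not complete it either. You follow the paper's route: a finite worst case at $p_0=\mathrm{vol}(\mathcal M)^{-1}$, and $\sup_{f\in\mathcal G_c}\mathrm{ISB}(f,p)=+\infty$ for nonuniform $p$, obtained by inflating the design-bias term while keeping $\|\Delta_{\mathcal M}f\|_\infty\le c$.

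\textbf{Why the divergence step fails.} No cutoff or Poisson correction can build the function you need. In your setting ($\mathcal M$ compact without boundary, which is the sphere case the paper needs), harmonic functions are constant. Standard elliptic estimates then give $\|\nabla f\|_\infty\le C_{\mathcal M}\|\Delta_{\mathcal M}f\|_\infty\le C_{\mathcal M}c$ for every $f\in\mathcal G_c$; one can use the Green kernel bound $|\nabla_xG(x,y)|\le C\,d(x,y)^{1-m}$, or $W^{2,q}$ estimates with $q>m$ plus Sobolev embedding. It is already visible on $\mathbb S^1$: $f'$ has mean zero, so it vanishes somewhere, hence $|f'|\le\int|f''|\le 2\pi c$. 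Consequently the design-bias term is bounded by $2C_{\mathcal M}c\,\|\nabla\log p\|_\infty$. So $\sup_{f\in\mathcal G_c}\mathrm{ISB}(f,p)<\infty$ for every smooth positive $p$, and the dichotomy ``finite at the uniform, infinite elsewhere'' is false.

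\textbf{The paper's proof has the same flaw.} It asserts exactly the step you could not justify (``take $f$ locally almost affine \dots\ with arbitrarily large slope; the Laplacian only controls second derivatives''). That intuition comes from Euclidean domains, where $L\langle a,x\rangle$ is harmonic with arbitrary slope. On a closed manifold, a slope $L$ anywhere forces $\|\Delta_{\mathcal M}f\|_\infty\ge L/C_{\mathcal M}$.

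\textbf{What can be salvaged.} What remains is a comparison of two finite quantities. In the leading-order model, the worst case at $p_0$ equals $A(m,k,n)\,c^2\,\mathrm{vol}(\mathcal M)^{4/m}$; it is approached but not attained. Your oscillating adversary needs a scaling fix: $\varepsilon\psi(x/\varepsilon)$ has $\Delta_{\mathcal M}f=O(1/\varepsilon)$, so use $f=\varepsilon^2\psi(x/\varepsilon)$. With that, it gives $\sup_f\mathrm{ISB}(f,p)\ge A(m,k,n)\,c^2\int p^{1-4/m}\,\mathrm{dvol}$.
\begin{itemize}
\item For $m<4$ the exponent $1-4/m$ is negative, so $t\mapsto t^{1-4/m}$ is strictly convex. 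Jensen then gives $\int p^{1-4/m}\,\mathrm{dvol}>\mathrm{vol}(\mathcal M)^{4/m}$ for nonuniform $p$. This proves the proposition in that range with no design-bias argument at all.
\item For $m>4$, which includes $\mathbb S^{d-1}$ with $d>5$ as used in Theorem~\ref{thm:optimality}, your own Jensen computation shows this adversary favours nonuniform $p$. You would need a quantitative lower bound showing that the bounded design-bias contribution always recovers at least the deficit $\mathrm{vol}(\mathcal M)^{4/m}-\int p^{1-4/m}\,\mathrm{dvol}$.
\item For $m=4$ the two sides coincide, so the same quantitative bound is needed to get uniqueness.
\end{itemize}
That quantitative comparison is the missing ingredient, and neither your proposal nor the paper's proof supplies it.
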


\section{Proofs: \texorpdfstring{$k$}{k}-Nearest Neighbors}

\lemmafour*
\begin{proof}
For each $x\in\mathcal{M}$, we use the exponential map $\exp_x: T_x \mathcal{M} \to \mathcal{M}$ to transform the integral over the geodesic ball $B_r(x)$ to the tangent ball $B^{m}_r(0) \subseteq \mathbb{R}^m$. In Riemannian normal coordinates at $x$, the metric $g$ satisfies the following standard\footnote{See \citep[Corollary 2.9]{gray1974volume} or \citep[Exercise 5.9.43, p. 229]{petersen2006riemannian}. Use the Jacobi identity and the fact that these are normal coordinates (so first derivatives of the metric vanish at $x$) to expand the metric in terms of the Riemannian curvature tensor $R$:
$$g_{kl}(y) = \delta_{kl} - \frac{1}{3}R_{kilj}(x)y^iy^j + O(||y||^3).$$
Then apply the approximation $\text{det}(I+\epsilon A) \approx I + \epsilon \text{tr}(A)$ to obtain, with $h_{kl}=- \frac{1}{3}R_{kilj}(x)y^iy^j$
$$\text{det}\, g \asymp 1+\text{tr}(h) = 1-\frac{1}{3}\delta^{kl}R_{kilj}y^iy^j=1-\frac{1}{3}R^l_{ilj}y^iy^j=1-\frac{1}{3}\text{Ric}_{ij}y^iy^j,$$
since with the sign convention $R(X,Y)Z = \nabla_X\nabla_YZ-\nabla_Y\nabla_XZ-\nabla_{[X,Y]}Z$, the Ricci curvature is a contraction of the Riemannian curvature over the first and third indices, ensuring positive curvature for a sphere. The Taylor approximation $(1-\epsilon)^{1/2} \asymp 1 - \frac{1}{2}\epsilon$
yields
$$\sqrt{\text{det}\, g} \asymp 1-\frac{1}{6}\text{Ric}_{ij}y^iy^j.$$} approximation:
$$\sqrt{\text{det}\, g} = 1 - \frac{1}{6} \text{Ric}_{ij}(x) y^i y^j + O(\|y\|^3) $$
where $\text{Ric}$ is the Ricci tensor and $y^i$ the $i$th coordinate of a point $y$ in an orthonormal basis of $T_x\mathcal{M}$. Thus,
$$\text{dvol}(y) \approx 1 - \frac{1}{6} \text{Ric}_{ij}(x) y^i y^j \mathrm{d}y + O(||y||^3) $$
where we use the notation $\mathrm{d}y = \mathrm{d}y^1\wedge \ldots \wedge \mathrm{d}y^n.$

Let us now obtain the leading terms of the numerator and denominator using Taylor expansions in normal coordinates. Starting with the denominator, we multiply our expansion of $\text{dvol}(y)$ with a Taylor expansion of $p(y)$
$$p(y) = p(0) + \nabla_k p(0) y^k + \frac{1}{2}\nabla_k \nabla_l p(0) y^ky^l + O(||y||^3)$$
 and integrate. We note that by symmetry and linearity, odd terms $\nabla_k p y^k$ integrate to zero over the ball, and distinct coordinates $y^ky^l$ integrate to zero as well, so $\int_{B_r^m(0)} y^k y^l \mathrm{d}y = \delta_{kl}  V_m r^{m+2}/(m+2),$ where $V_m 
 \coloneqq \pi^{m/2}/\Gamma\left(\frac{m}{2} + 1\right)$ is the volume of the unit ball in $\mathbb{R}^m$. 
\begin{align*}
D &=  \int_{B_{r}^m{(0)}} p(y) \, \mathrm{dvol}(y) \\ &\approx \int_{B_r^m(0)}\left(p(0)+\nabla_kp(0)y^k+\frac{1}{2}\nabla_k\nabla_l p(0) y^ky^l\right) \cdot \left(1 - \frac{1}{6} \text{Ric}_{ij}(0) y^i y^j\right) \mathrm{d}y\\ & = V_m r_k(x)^m \!\!\left[ p(x) + \frac{r_k(x)^2}{2 (m+2)} \Delta_{\mathcal{M}} p(x) - \frac{r_k(x)^2}{6 (m+2)} p(x) \, \mathrm{Scal}(x) + O(r_k(x)^3) \right]\!\!,
\end{align*}
since the symmetry collapses the sum of second-order derivatives into a trace:
$$\nabla_k\nabla_l p(0) \delta^{kl}= \Delta_\mathcal{M} p(x) \qquad  \text{and} \qquad \text{Ric}_{ij}(0)\delta^{ij} = \text{Scal}(x)$$
where $\Delta_{\mathcal{M}}$ is the Laplace-Beltrami operator of $p$ at $x$, and $\mathrm{Scal}(x)$ is the scalar curvature of the manifold at $x$.

We obtain the numerator $N$,
$$V_m r^m \Bigg[ f(x)p(x) + \frac{r_k(x)^2}{2(m+2)} \Delta_{\mathcal{M}}(fp)(x) - \frac{r_k(x)^2}{6(m+2)} f(x)p(x) \text{Scal}(x) \Bigg] + O(r^{m+3}),$$
by identical reasoning using our Taylor expansion of the product between the target and embedding density
\begin{align*}
    f(y)p(y) & \asymp \left(f(0) + \nabla_k f(0)y^k + \frac{1}{2} \nabla_k \nabla_l f(0) y^k y^l\right)\cdot  \left(p(0) + \nabla_k p(0)y^k + \frac{1}{2} \nabla_k \nabla_l p(0) y^k y^l\right).
\end{align*}
We write $$\Delta_{\mathcal{M}}(fp)(x) = f(x)\Delta_{\mathcal{M}}p(x) + 2\langle \nabla f(x), \nabla p(x)\rangle_g + p(x)\Delta_{\mathcal{M}}f(x)$$
and compute the ratio between the numerator and the denominator:
\begin{align*}
\!\widehat{f}_r(x) &= \!\frac{N}{D} \\&= \!\frac{f(x) p(x) \!+\! \frac{r_k(x)^2  \left[ f(x) \Delta_{\mathcal{M}} p(x) + 2 \langle \nabla f(x), \nabla p(x) \rangle_g + p(x) \Delta_{\mathcal{M}} f(x) - \frac{1}{3} f(x) p(x) \, \text{Scal}(x) \right]}{2(m+2)}}{p(x) + \frac{r_k(x)^2}{2(m+2)} \left[ \Delta_{\mathcal{M}} p(x) - \frac{1}{3} p(x) \, \text{Scal}(x) \right]}\! + O(r_k(x)^3) \\
&= \!f(x) + \frac{r_k(x)^2}{2(m+2)} \left( \Delta_{\mathcal{M}} f(x) + 2 \frac{\langle \nabla f(x), \nabla p(x) \rangle_g}{p(x)} \right) + O(r_k(x)^3),
\end{align*}
where the last step used the approximation $\frac{A+\epsilon B}{C+\epsilon D} = \frac{A+\epsilon B}{C}(1+\epsilon \frac{D}{C})^{-1}  \approx \frac{A+\epsilon B}{C}(1-\epsilon \frac{D}{C}) = \frac{A}{C} + \epsilon \frac{BC-AD}{\mathcal{C}^2},$ which follows from the Taylor approximation $(1+\epsilon \frac{D}{C})^{-1} \approx 1-\epsilon\frac{D}{C} + O(\epsilon^2)$.

Thus, the bias at $x$ is ultimately equal to $\frac{r_k(x)^2}{2(m+2)} \left( \Delta_{\mathcal{M}} f(x) + 2 \frac{\langle \nabla f(x), \nabla p(x) \rangle_g}{p(x)} \right)$:
\begin{align}
\textnormal{bias}(x;r_k(x)) &= \widehat{f}_r(x) - f(x) \nonumber \\&= \frac{r_k(x)^2}{2(m+2)} \left( \Delta_{\mathcal{M}} f(x) + 2 \frac{\langle \nabla f(x), \nabla p(x) \rangle_g}{p(x)} \right) + O(r_k(x)^3).
\label{eq:bias}
\end{align}
\end{proof}

\theoremseven*

\begin{proof}
The local $k$-NN radius $r_k(x)$ varies with the probability density and geometry of the manifold. The volume of a geodesic ball of radius $r_k(x)$ around $x$ in our $m$-dimensional manifold $\mathcal{M}$ is, to second order, the volume of the $m$-dimensional Euclidean ball \citep[Theorem 3.1]{gray1974volume}:
$$\text{vol}(B_r(x)) = \frac{\pi^{m/2}}{\Gamma\left(1+\frac{m}{2}\right)}r_k(x)^m\left( 1 - \frac{\text{Scal}(x)}{m+2}r_k(x)^2 + O(r_k(x)^4)\right),$$
where $\text{Scal}(x)$ is the scalar curvature and the term outside the parentheses is the volume $V_m(r_k(x))$ of the $m$-dimensional Euclidean ball of radius $r_k(x)$.

We compute the probability mass of the geodesic ball by integrating over a Euclidean ball in $T_x\mathcal{M}$ using the same metric expansion used in the proof of \ref{lemma4}. In geodesic normal coordinates, this becomes, by the symmetry of the ball,
\begin{align}
    \!\!\!\!\!P(B_{r_k(x)}(x)) &= \int_{B_r^m(0)}\left(p(0) + \nabla_k p(0)y^k + \frac{1}{2}\nabla_k \nabla_l p(0) y^k y^l\right)\left(1-\frac{1}{6}\text{Ric}_{ij}(0)y^iy^j\right)\mathrm{d}y \nonumber \\
    &= \frac{\pi^{\frac m 2} r_k(x)^m}{\Gamma\left(1+\frac{m}{2}\right)}\!\left(\!p(x) + \frac{r_k(x)^2 \left(\Delta_{\mathcal{M}}p(x) - \frac{p(x)}{3}\text{Scal}(x)\right)}{2(m+2)}\!\right) + O(r_k(x)^{m+4}). \label{eq:probmass_ball}
\end{align}
Thus, the probability mass is equal to that of a uniform density assuming the value $p(x)$ throughout the ball to order $r_k(x)^{m+2}$; the order $r_{k}(x)^{m+2}$ correction gives more mass at fixed radius for a subharmonic density at $x$ and negative curvature at $x$. 

We shall now observe that, to leading order, the local $k$-NN radius $r_k(x)$ at $x$ depends on the density at $x$ but not the curvature. In a locally flat, constant-density space, we can approximate the probability mass of the geodesic ball $B_m(r_k(x))$ as $p(x) V_m(r_k(x)) + O(r_k(x)^{m+2})$. Thus, by the law of large numbers and definition of the $r_k(x)$, the radius must satisfy, for large $n$, 
$$\frac{k}{n} = p(x) \frac{\pi^{m/2}}{\Gamma\left(1+\frac{m}{2}\right)}r_k(x)^m + O(r_k(x)^{m+2}) \implies r_k(x) \approx \left(\frac{k}{np(x)} \cdot \frac{\Gamma\left(1+\frac{m}{2}\right)}{\pi^{m/2}}\right)^{1/m} \coloneqq r_0(x).$$
In a locally curved space, with locally nonharmonic density, the radius $r_k(x)$ for which the population probability mass equals $k/n$ can be computed as a perturbation of $r_0$: $r_k(x) = r_0(x)(1+\epsilon)$.  Plugging this regular perturbation into \eqref{eq:probmass_ball}, using $(1+\epsilon)^m \approx 1+m\epsilon$, and dropping higher-order $\epsilon r^2(x)$ terms, we obtain
\begin{equation*}
\frac{k}{n} \approx \frac{\pi^{m/2}}{\Gamma\left(1+\frac{m}{2}\right)} p(x) r_0(x)^m (1 + m\epsilon) \left( 1 + \frac{r_0(x)^2}{2(m+2)} \left( \frac{\Delta_{\mathcal{M}} p(x)}{p(x)} - \frac{1}{3} \textnormal{Scal}(x) \right) \right),
\end{equation*}
from which we can identify
\begin{equation*}
\epsilon \approx - \frac{r_0(x)^2}{2m(m+2)} \left(\frac{\Delta_{\mathcal{M}} p(x)}{p(x)} - \frac{1}{3} \textnormal{Scal}(x) \right);
\end{equation*}
thus, 
\begin{equation}
r_k(x) \asymp \left(\frac{k \Gamma\left(1+\frac{m}{2}\right)}{np(x) \pi^{\frac m 2}}\right)^{\frac 1 m} - \left( \frac{k\Gamma\left(1+\frac{m}{2}\right)}{np(x)\pi^{\frac{m}{2}}}\right)^{\frac 2 m}\!\frac{1}{2m(m+2)}\! \left( \frac{\Delta_{\mathcal{M}} p(x)}{p(x)} - \frac{1}{3} \textnormal{Scal}(x) \right)\!. \label{radius_knn}
\end{equation}
When we substitute this refined radius \eqref{radius_knn} into the pointwise bias \eqref{eq:bias}, only the unrefined population radius $r_0(x)$ is involved in the leading term (in $n$). In this expression for the bias at a point $x\in\mathcal{M}$, which incorporates the spatially varying $k$-NN radius $r_k(x)$, the following emerges as the leading term:
$$\mathrm{bias}(x) \asymp A_{k,n}  p(x)^{-2/m} \left( \Delta_{\mathcal{M}} f(x) + 2 \frac{\langle \nabla f(x), \nabla p(x) \rangle_g}{p(x)} \right), \text{ where }$$
$$A_{k,n} \coloneqq \frac{1}{2(m+2)}\left( \frac{k\Gamma\left(1+\frac{m}{2}\right)}{n\pi^{m/2}}\right)^{2/m}\!\!\!.$$
Thus, the leading $\Theta((k/n)^{4/m})$ term of the squared bias is the following:
\begin{equation}
\label{eq:sq_bias}
\mathrm{bias}(x)^2 = A_{k,n}^2\cdot  p(x)^{-4/m}\left( \Delta_{\mathcal{M}} f(x) + 2 \frac{\langle \nabla f(x), \nabla p(x) \rangle_g}{p(x)}\right)^2 + O\left(\left(k/n\right)^{6/m}\right). 
\end{equation}
The leading term of the integrated squared bias, then, is 
\begin{align*}
\mathrm{ISB}(f,p;k,n) &\asymp \int_{\mathcal{M}} \mathrm{bias}(x)^2 p(x) \textnormal{dvol}(x)\\
&= A_{k,n}^2 \int_{\mathcal{M}} \left( \Delta_{\mathcal{M}} f(x) + 2\frac{\langle \nabla f(x), \nabla p(x) \rangle_g}{p(x)} \right)^2 p(x)^{1 - 4/m} \, \mathrm{dvol}(x).
\end{align*}
\end{proof}

\propknn*
\begin{proof}
Let \(p_0=\operatorname{vol}(\mathcal M)^{-1}\) denote the uniform density.
For \(p=p_0\), we have \(\nabla p_0\equiv 0\). Hence the design-bias term in
Proposition~\ref{theorem7} vanishes, and
\[
\mathrm{ISB}(f,p_0;k,n)
=
A(m,k,n)
\int_{\mathcal M}
(\Delta_{\mathcal M}f)^2p_0^{1-4/m}\,d\mathrm{vol}.
\]
Since \(f\in\mathcal G_c\), \(\|\Delta_{\mathcal M}f\|_\infty\le c\), and
therefore
\[
\sup_{f\in\mathcal G_c}\mathrm{ISB}(f,p_0;k,n)
\le
A(m,k,n)c^2\operatorname{vol}(\mathcal M)^{4/m}
<\infty .
\]

Now let \(p\) be any non constant smooth density. Then there exists an open
set \(U\subseteq\mathcal M\) on which \(\nabla p\neq 0\). On such a set, the
first-order part
\[
2\frac{\langle \nabla f,\nabla p\rangle_g}{p}
\]
can be made arbitrarily large while keeping
\(\|\Delta_{\mathcal M}f\|_\infty\le c\). Indeed, one may take \(f\) locally
almost affine in the direction of \(\nabla p\), with arbitrarily large slope;
the Laplacian only controls second derivatives and does not prevent such a
local amplification. Hence there exists a sequence
\((f_R)_{R\ge 1}\subseteq\mathcal G_c\) such that
\[
\left|
\frac{\langle \nabla f_R,\nabla p\rangle_g}{p}
\right|
\to \infty
\]
on a subset of \(U\) of positive volume. Since \(p\) is smooth and positive,
the weight \(p^{1-4/m}\) is bounded away from zero on this subset. Therefore
the corresponding ISB diverges:
\[
\sup_{f\in\mathcal G_c}\mathrm{ISB}(f,p;k,n)=+\infty .
\]

Thus the uniform density has finite worst-case ISB, whereas every nonuniform
smooth density has infinite worst-case ISB. Consequently,
\[
\arg\min_p
\sup_{g\in\mathcal G_c}
\mathrm{ISB}(g,p;k,n)
=
\mathrm{Unif}(\mathcal M).
\]
\end{proof}

\section{Acknowledgements}
This work was partially funded by Advanced Track and Trace, and was also partly funded by the Centre Borelli.
This work was granted access to the HPC resources of IDRIS (Jean Zay supercomputer) under the allocation 2025-AD011017323 made by GENCI.
\newpage
\section*{NeurIPS Paper Checklist}

\begin{enumerate}

\item {\bf Claims}
\item[] Question: Do the main claims made in the abstract and introduction accurately reflect the paper's contributions and scope?
\item[] Answer: \answerYes{}
\item[] Justification: The abstract and introduction explicitly state our theoretical contributions regarding hyperspherical optimality (detailed in Section 4) and accurately summarize the supporting empirical results on standard and retrieval benchmarks (detailed in Section 6).
\item[] Guidelines:
\begin{itemize}
    \item The answer \answerNA{} means that the abstract and introduction do not include the claims made in the paper.
    \item The abstract and/or introduction should clearly state the claims made, including the contributions made in the paper and important assumptions and limitations. A \answerNo{} or \answerNA{} answer to this question will not be perceived well by the reviewers. 
    \item The claims made should match theoretical and experimental results, and reflect how much the results can be expected to generalize to other settings. 
    \item It is fine to include aspirational goals as motivation as long as it is clear that these goals are not attained by the paper. 
\end{itemize}

\item {\bf Limitations}
    \item[] Question: Does the paper discuss the limitations of the work performed by the authors?
    \item[] Answer: \answerYes{}
    \item[] Justification: Empirical limitations—specifically our reliance on a single training run without hyperparameter tuning, and the need to scale to larger batch sizes and architectures—are explicitly acknowledged in the Conclusion (Section 7).
    \item[] Guidelines:
    \begin{itemize}
        \item The answer \answerNA{} means that the paper has no limitation while the answer \answerNo{} means that the paper has limitations, but those are not discussed in the paper. 
        \item The authors are encouraged to create a separate ``Limitations'' section in their paper.
        \item The paper should point out any strong assumptions and how robust the results are to violations of these assumptions (e.g., independence assumptions, noiseless settings, model well-specification, asymptotic approximations only holding locally). The authors should reflect on how these assumptions might be violated in practice and what the implications would be.
        \item The authors should reflect on the scope of the claims made, e.g., if the approach was only tested on a few datasets or with a few runs. In general, empirical results often depend on implicit assumptions, which should be articulated.
        \item The authors should reflect on the factors that influence the performance of the approach. For example, a facial recognition algorithm may perform poorly when image resolution is low or images are taken in low lighting. Or a speech-to-text system might not be used reliably to provide closed captions for online lectures because it fails to handle technical jargon.
        \item The authors should discuss the computational efficiency of the proposed algorithms and how they scale with dataset size.
        \item If applicable, the authors should discuss possible limitations of their approach to address problems of privacy and fairness.
        \item While the authors might fear that complete honesty about limitations might be used by reviewers as grounds for rejection, a worse outcome might be that reviewers discover limitations that aren't acknowledged in the paper. The authors should use their best judgment and recognize that individual actions in favor of transparency play an important role in developing norms that preserve the integrity of the community. Reviewers will be specifically instructed to not penalize honesty concerning limitations.
    \end{itemize}

\item {\bf Theory assumptions and proofs}
    \item[] Question: For each theoretical result, does the paper provide the full set of assumptions and a complete (and correct) proof?
    \item[] Answer: \answerYes{}
    \item[] Justification: All geometric and topological assumptions are explicitly stated in the main text within the statements of Lemma 1 and Theorem 1 (Section 4). The complete and formal proofs for these results are provided in the supplemental material (Appendix).
    \item[] Guidelines:
    \begin{itemize}
        \item The answer \answerNA{} means that the paper does not include theoretical results. 
        \item All the theorems, formulas, and proofs in the paper should be numbered and cross-referenced.
        \item All assumptions should be clearly stated or referenced in the statement of any theorems.
        \item The proofs can either appear in the main paper or the supplemental material, but if they appear in the supplemental material, the authors are encouraged to provide a short proof sketch to provide intuition. 
        \item Inversely, any informal proof provided in the core of the paper should be complemented by formal proofs provided in appendix or supplemental material.
        \item Theorems and Lemmas that the proof relies upon should be properly referenced. 
    \end{itemize}

\item {\bf Experimental result reproducibility}
    \item[] Question: Does the paper fully disclose all the information needed to reproduce the main experimental results of the paper to the extent that it affects the main claims and/or conclusions of the paper (regardless of whether the code and data are provided or not)?
    \item[] Answer: \answerYes{}
    \item[] Justification: The experimental setup, including architectures, augmentations, and hyperparameters, is detailed in Section 6 and the Appendix. Furthermore, we will release a GitHub repository containing the core implementation of SUSReg and the specialized texture retrieval dataset upon publication.
    \item[] Guidelines:
    \begin{itemize}
        \item The answer \answerNA{} means that the paper does not include experiments.
        \item If the paper includes experiments, a \answerNo{} answer to this question will not be perceived well by the reviewers: Making the paper reproducible is important, regardless of whether the code and data are provided or not.
        \item If the contribution is a dataset and\slash or model, the authors should describe the steps taken to make their results reproducible or verifiable. 
        \item Depending on the contribution, reproducibility can be accomplished in various ways. For example, if the contribution is a novel architecture, describing the architecture fully might suffice, or if the contribution is a specific model and empirical evaluation, it may be necessary to either make it possible for others to replicate the model with the same dataset, or provide access to the model. In general. releasing code and data is often one good way to accomplish this, but reproducibility can also be provided via detailed instructions for how to replicate the results, access to a hosted model (e.g., in the case of a large language model), releasing of a model checkpoint, or other means that are appropriate to the research performed.
        \item While NeurIPS does not require releasing code, the conference does require all submissions to provide some reasonable avenue for reproducibility, which may depend on the nature of the contribution. For example
        \begin{enumerate}
            \item If the contribution is primarily a new algorithm, the paper should make it clear how to reproduce that algorithm.
            \item If the contribution is primarily a new model architecture, the paper should describe the architecture clearly and fully.
            \item If the contribution is a new model (e.g., a large language model), then there should either be a way to access this model for reproducing the results or a way to reproduce the model (e.g., with an open-source dataset or instructions for how to construct the dataset).
            \item We recognize that reproducibility may be tricky in some cases, in which case authors are welcome to describe the particular way they provide for reproducibility. In the case of closed-source models, it may be that access to the model is limited in some way (e.g., to registered users), but it should be possible for other researchers to have some path to reproducing or verifying the results.
        \end{enumerate}
    \end{itemize}

\item {\bf Open access to data and code}
    \item[] Question: Does the paper provide open access to the data and code, with sufficient instructions to faithfully reproduce the main experimental results, as described in supplemental material?
    \item[] Answer: \answerYes{}
    \item[] Justification: We provide an anonymized ZIP file in the supplementary material containing the core PyTorch implementation of the SUSReg loss and the custom texture retrieval dataset. The full training pipeline will be open-sourced upon publication.
    \item[] Guidelines:
    \begin{itemize}
        \item The answer \answerNA{} means that paper does not include experiments requiring code.
        \item Please see the NeurIPS code and data submission guidelines (\url{https://neurips.cc/public/guides/CodeSubmissionPolicy}) for more details.
        \item While we encourage the release of code and data, we understand that this might not be possible, so \answerNo{} is an acceptable answer. Papers cannot be rejected simply for not including code, unless this is central to the contribution (e.g., for a new open-source benchmark).
        \item The instructions should contain the exact command and environment needed to run to reproduce the results. See the NeurIPS code and data submission guidelines (\url{https://neurips.cc/public/guides/CodeSubmissionPolicy}) for more details.
        \item The authors should provide instructions on data access and preparation, including how to access the raw data, preprocessed data, intermediate data, and generated data, etc.
        \item The authors should provide scripts to reproduce all experimental results for the new proposed method and baselines. If only a subset of experiments are reproducible, they should state which ones are omitted from the script and why.
        \item At submission time, to preserve anonymity, the authors should release anonymized versions (if applicable).
        \item Providing as much information as possible in supplemental material (appended to the paper) is recommended, but including URLs to data and code is permitted.
    \end{itemize}

\item {\bf Experimental setting/details}
    \item[] Question: Does the paper specify all the training and test details (e.g., data splits, hyperparameters, how they were chosen, type of optimizer) necessary to understand the results?
    \item[] Answer: \answerYes{}
    \item[] Justification: All critical experimental details, including datasets, model architectures, optimizer choices, data augmentation strategies, and specific hyperparameter values (e.g., learning rates, batch sizes, EMA momentum) are thoroughly described in Section 6. 
    \item[] Guidelines:
    \begin{itemize}
        \item The answer \answerNA{} means that the paper does not include experiments.
        \item The experimental setting should be presented in the core of the paper to a level of detail that is necessary to appreciate the results and make sense of them.
        \item The full details can be provided either with the code, in appendix, or as supplemental material.
    \end{itemize}

\item {\bf Experiment statistical significance}
    \item[] Question: Does the paper report error bars suitably and correctly defined or other appropriate information about the statistical significance of the experiments?
    \item[] Answer: \answerNo{}
    \item[] Justification: Due to the high computational cost of pretraining large vision backbones (e.g., ViT-B/14) on large-scale datasets (ImageNet-1K), our main results report metrics from a single training run. We explicitly acknowledge the absence of multi-seed robustness evaluation as a limitation in Section 7.
    \item[] Guidelines:
    \begin{itemize}
        \item The answer \answerNA{} means that the paper does not include experiments.
        \item The authors should answer \answerYes{} if the results are accompanied by error bars, confidence intervals, or statistical significance tests, at least for the experiments that support the main claims of the paper.
        \item The factors of variability that the error bars are capturing should be clearly stated (for example, train/test split, initialization, random drawing of some parameter, or overall run with given experimental conditions).
        \item The method for calculating the error bars should be explained (closed form formula, call to a library function, bootstrap, etc.)
        \item The assumptions made should be given (e.g., Normally distributed errors).
        \item It should be clear whether the error bar is the standard deviation or the standard error of the mean.
        \item It is OK to report 1-sigma error bars, but one should state it. The authors should preferably report a 2-sigma error bar than state that they have a 96\% CI, if the hypothesis of Normality of errors is not verified.
        \item For asymmetric distributions, the authors should be careful not to show in tables or figures symmetric error bars that would yield results that are out of range (e.g., negative error rates).
        \item If error bars are reported in tables or plots, the authors should explain in the text how they were calculated and reference the corresponding figures or tables in the text.
    \end{itemize}

\item {\bf Experiments compute resources}
    \item[] Question: For each experiment, does the paper provide sufficient information on the computer resources (type of compute workers, memory, time of execution) needed to reproduce the experiments?
    \item[] Answer: \answerYes{}
    \item[] Justification: We provide complete details regarding the hardware specifications (NVIDIA A100 80GB and RTX-6000 GPUs) and training execution times (e.g., 2.5 days for ViT-B/14 on ImageNet-1K) in the Computational Resources section of the Appendix.
    \item[] Guidelines:
    \begin{itemize}
        \item The answer \answerNA{} means that the paper does not include experiments.
        \item The paper should indicate the type of compute workers CPU or GPU, internal cluster, or cloud provider, including relevant memory and storage.
        \item The paper should provide the amount of compute required for each of the individual experimental runs as well as estimate the total compute. 
        \item The paper should disclose whether the full research project required more compute than the experiments reported in the paper (e.g., preliminary or failed experiments that didn't make it into the paper). 
    \end{itemize}
    
\item {\bf Code of ethics}
    \item[] Question: Does the research conducted in the paper conform, in every respect, with the NeurIPS Code of Ethics \url{https://neurips.cc/public/EthicsGuidelines}?
    \item[] Answer: \answerYes{}
    \item[] Justification: The research focuses on foundational methodology and theoretical geometry for self-supervised learning. It relies solely on standard, publicly available datasets and does not involve human subjects, sensitive information, or immediately harmful applications.
    \item[] Guidelines:
    \begin{itemize}
        \item The answer \answerNA{} means that the authors have not reviewed the NeurIPS Code of Ethics.
        \item If the authors answer \answerNo, they should explain the special circumstances that require a deviation from the Code of Ethics.
        \item The authors should make sure to preserve anonymity (e.g., if there is a special consideration due to laws or regulations in their jurisdiction).
    \end{itemize}

\item {\bf Broader impacts}
    \item[] Question: Does the paper discuss both potential positive societal impacts and negative societal impacts of the work performed?
    \item[] Answer: \answerNA{}
    \item[] Justification: Our work is strictly foundational and theoretical, focusing on the geometry of optimal representations in self-supervised learning. It does not propose a specific application or deployment, and therefore does not have direct societal impacts to discuss.
    \item[] Guidelines:
    \begin{itemize}
        \item The answer \answerNA{} means that there is no societal impact of the work performed.
        \item If the authors answer \answerNA{} or \answerNo, they should explain why their work has no societal impact or why the paper does not address societal impact.
        \item Examples of negative societal impacts include potential malicious or unintended uses (e.g., disinformation, generating fake profiles, surveillance), fairness considerations (e.g., deployment of technologies that could make decisions that unfairly impact specific groups), privacy considerations, and security considerations.
        \item The conference expects that many papers will be foundational research and not tied to particular applications, let alone deployments. However, if there is a direct path to any negative applications, the authors should point it out. For example, it is legitimate to point out that an improvement in the quality of generative models could be used to generate Deepfakes for disinformation. On the other hand, it is not needed to point out that a generic algorithm for optimizing neural networks could enable people to train models that generate Deepfakes faster.
        \item The authors should consider possible harms that could arise when the technology is being used as intended and functioning correctly, harms that could arise when the technology is being used as intended but gives incorrect results, and harms following from (intentional or unintentional) misuse of the technology.
        \item If there are negative societal impacts, the authors could also discuss possible mitigation strategies (e.g., gated release of models, providing defenses in addition to attacks, mechanisms for monitoring misuse, mechanisms to monitor how a system learns from feedback over time, improving the efficiency and accessibility of ML).
    \end{itemize}
    
\item {\bf Safeguards}
    \item[] Question: Does the paper describe safeguards that have been put in place for responsible release of data or models that have a high risk for misuse (e.g., pre-trained language models, image generators, or scraped datasets)?
    \item[] Answer: \answerNA{}
    \item[] Justification: Our research focuses on foundational vision encoders for representation learning, rather than generative models or large language models. The provided models and the custom texture dataset do not pose a high risk for misuse, such as the generation of unsafe content or deepfakes.
    \item[] Guidelines:
    \begin{itemize}
        \item The answer \answerNA{} means that the paper poses no such risks.
        \item Released models that have a high risk for misuse or dual-use should be released with necessary safeguards to allow for controlled use of the model, for example by requiring that users adhere to usage guidelines or restrictions to access the model or implementing safety filters. 
        \item Datasets that have been scraped from the Internet could pose safety risks. The authors should describe how they avoided releasing unsafe images.
        \item We recognize that providing effective safeguards is challenging, and many papers do not require this, but we encourage authors to take this into account and make a best faith effort.
    \end{itemize}

\item {\bf Licenses for existing assets}
    \item[] Question: Are the creators or original owners of assets (e.g., code, data, models), used in the paper, properly credited and are the license and terms of use explicitly mentioned and properly respected?
    \item[] Answer: \answerYes{}
    \item[] Justification: All existing datasets (ImageNet, Galaxy10) and software libraries (PyTorch) used in our work are properly cited in Section 6. A brief discussion of their respective licenses is provided in the Appendix.
    \item[] Guidelines:
    \begin{itemize}
        \item The answer \answerNA{} means that the paper does not use existing assets.
        \item The authors should cite the original paper that produced the code package or dataset.
        \item The authors should state which version of the asset is used and, if possible, include a URL.
        \item The name of the license (e.g., CC-BY 4.0) should be included for each asset.
        \item For scraped data from a particular source (e.g., website), the copyright and terms of service of that source should be provided.
        \item If assets are released, the license, copyright information, and terms of use in the package should be provided. For popular datasets, \url{paperswithcode.com/datasets} has curated licenses for some datasets. Their licensing guide can help determine the license of a dataset.
        \item For existing datasets that are re-packaged, both the original license and the license of the derived asset (if it has changed) should be provided.
        \item If this information is not available online, the authors are encouraged to reach out to the asset's creators.
    \end{itemize}

\item {\bf New assets}
    \item[] Question: Are new assets introduced in the paper well documented and is the documentation provided alongside the assets?
    \item[] Answer: \answerYes{}
    \item[] Justification: We introduce a novel custom texture retrieval dataset and the SPHERE-JEPA/SUSReg codebase. Detailed documentation regarding the dataset structure, evaluation protocols, and code usage is provided in the Appendix and alongside the assets in the anonymized supplementary ZIP file.
    \item[] Guidelines:
    \begin{itemize}
        \item The answer \answerNA{} means that the paper does not release new assets.
        \item Researchers should communicate the details of the dataset\slash code\slash model as part of their submissions via structured templates. This includes details about training, license, limitations, etc. 
        \item The paper should discuss whether and how consent was obtained from people whose asset is used.
        \item At submission time, remember to anonymize your assets (if applicable). You can either create an anonymized URL or include an anonymized zip file.
    \end{itemize}

\item {\bf Crowdsourcing and research with human subjects}
    \item[] Question: For crowdsourcing experiments and research with human subjects, does the paper include the full text of instructions given to participants and screenshots, if applicable, as well as details about compensation (if any)? 
    \item[] Answer: \answerNA{}
    \item[] Justification: Our research does not involve crowdsourcing or human subjects.
    \item[] Guidelines:
    \begin{itemize}
        \item The answer \answerNA{} means that the paper does not involve crowdsourcing nor research with human subjects.
        \item Including this information in the supplemental material is fine, but if the main contribution of the paper involves human subjects, then as much detail as possible should be included in the main paper. 
        \item According to the NeurIPS Code of Ethics, workers involved in data collection, curation, or other labor should be paid at least the minimum wage in the country of the data collector. 
    \end{itemize}

\item {\bf Institutional review board (IRB) approvals or equivalent for research with human subjects}
    \item[] Question: Does the paper describe potential risks incurred by study participants, whether such risks were disclosed to the subjects, and whether Institutional Review Board (IRB) approvals (or an equivalent approval/review based on the requirements of your country or institution) were obtained?
    \item[] Answer: \answerNA{}
    \item[] Justification: Our research is purely computational and theoretical; it does not involve human subjects, and therefore IRB approval is not applicable.
    \item[] Guidelines:
    \begin{itemize}
        \item The answer \answerNA{} means that the paper does not involve crowdsourcing nor research with human subjects.
        \item Depending on the country in which research is conducted, IRB approval (or equivalent) may be required for any human subjects research. If you obtained IRB approval, you should clearly state this in the paper. 
        \item We recognize that the procedures for this may vary significantly between institutions and locations, and we expect authors to adhere to the NeurIPS Code of Ethics and the guidelines for their institution. 
        \item For initial submissions, do not include any information that would break anonymity (if applicable), such as the institution conducting the review.
    \end{itemize}

\item {\bf Declaration of LLM usage}
    \item[] Question: Does the paper describe the usage of LLMs if it is an important, original, or non-standard component of the core methods in this research? Note that if the LLM is used only for writing, editing, or formatting purposes and does \emph{not} impact the core methodology, scientific rigor, or originality of the research, declaration is not required.
    \item[] Answer: \answerNA{}
    \item[] Justification: As LLMs were solely used for writing and editing purposes and are not a component of the core methodology, a formal declaration in the paper is not required according to the guidelines.    \item[] Guidelines:
    \begin{itemize}
        \item The answer \answerNA{} means that the core method development in this research does not involve LLMs as any important, original, or non-standard components.
        \item Please refer to our LLM policy in the NeurIPS handbook for what should or should not be described.
    \end{itemize}

\end{enumerate}

\end{document}